\theoremstyle{plain}
\newtheorem{theorem}{Theorem}[section]
\newtheorem{lemma}[theorem]{Lemma}
\newtheorem{corollary}[theorem]{Corollary}
\theoremstyle{definition}
\newtheorem{definition}[theorem]{Definition}
\newtheorem{assumption}[theorem]{Assumption}
\theoremstyle{remark}
\newtheorem{remark}[theorem]{Remark}
\newtheorem{propo}[theorem]{Proposition}
\newtheoremstyle{myexample} 
    {\topsep}                    
    {\topsep}                    
    {\rm }                   
    {}                           
    {\bf }                   
    {.}                          
    {.5em}                       
    {}  
\newtheoremstyle{myremark} 
    {\topsep}                    
    {\topsep}                    
    {\rm}                        
    {}                           
    {\bf}                        
    {.}                          
    {.5em}                       
    {}  
\newcommand{\bq}{{\boldsymbol{q}}}
\def\R{\mathbb{R}}
\def\bp{{\boldsymbol p}}
\def\0{{\boldsymbol 0}}
\def\1{{\boldsymbol 1}}
\def\1{{\boldsymbol 1}}
\newcommand{\E}{\operatorname{\mathbb{E}}}
\icmltitlerunning{Conformal Prediction with Learned Features}
\begin{document}

\twocolumn[
\icmltitle{Conformal Prediction with Learned Features}




\begin{icmlauthorlist}
\icmlauthor{Shayan Kiyani}{yyy}
\icmlauthor{George Pappas}{yyy}
\icmlauthor{Hamed Hassani}{yyy}

\end{icmlauthorlist}

\icmlaffiliation{yyy}{The Electrical and Systems Engineering Department, University of Pennsylvania, University of Pennsylvania, USA}

\icmlcorrespondingauthor{Shayan Kiyani}{shayank@seas.upenn.edu}
\icmlcorrespondingauthor{George Pappas}{pappasg@seas.upenn.edu}
\icmlcorrespondingauthor{Hamed Hassani}{hassani@seas.upenn.edu}

\icmlkeywords{Machine Learning, ICML}

\vskip 0.3in
]



\printAffiliationsAndNotice{} 

\begin{abstract}
In this paper, we focus on the problem of conformal prediction with conditional guarantees. Prior work has shown that it is impossible to construct nontrivial prediction sets with full conditional coverage guarantees. A wealth of research has considered relaxations of full conditional guarantees, relying on some \emph{predefined} uncertainty structures. Departing from this line of thinking, we propose Partition Learning Conformal Prediction (PLCP), a framework to improve conditional validity of prediction sets through \emph{learning} uncertainty-guided features from the calibration data. We implement PLCP efficiently with alternating gradient descent, utilizing off-the-shelf machine learning models. We further analyze PLCP theoretically and provide conditional guarantees for infinite and finite sample sizes. Finally, our experimental results over four real-world and synthetic datasets show the superior performance of PLCP compared to state-of-the-art methods in terms of coverage and length in both classification and regression scenarios.

\end{abstract}

\section{Introduction} \label{sec: introduction}
Consider a distribution $\mathcal{D}$ over a domain $\mathcal{X} \times \mathcal{Y}$,  where  $\mathcal{X}$ denotes the space of covariates and $\mathcal{Y}$ denotes the space of labels. Let  $f:\mathcal{X}\rightarrow \mathcal{Y}$ be a (pre-trained) \emph{model} that provides for every input $x$ a \emph{point estimate} of the corresponding label $y$.  Using the model $f$ and  a set of new calibration samples $\left(X_1, Y_1\right), \ldots,\left(X_n, Y_n\right)$, generated i.i.d. from $\mathcal{D}$, the goal of conformal prediction is  to construct for every input $x$ a \emph{prediction set} $C(x)$ that is guaranteed to cover the true label $y$ with high probability. Formally, we say that the prediction sets $C(x) \subseteq \mathcal{Y}$ have \emph{marginal} coverage guarantee if for a test sample $(X_{n+1}, Y_{n+1})$ we have
\begin{equation} \label{marginal}
\Pr\left(Y_{n+1} \in C\left(X_{n+1}\right)\right) = 1-\alpha,
\end{equation}
where $\alpha$ is the miscoverage rate, and the probability is taken over the randomness in  calibration and test points. 

Oftentimes in practice, methods that only guarantee marginal coverage fail to provide valid coverage with respect to specific subgroups or under changing conditions \cite{Romano2020With, Guan2021LocalizedCP, lei2014distribution}. This issue is particularly evident in applications such as healthcare, where obtaining valid prediction sets for different patient demographics is crucial. For instance, a marginal method might perform well on average over new patients but fail to construct accurate prediction sets for certain age groups or medical conditions. 

Ultimately, we may seek to construct prediction sets that achieve \emph{full conditional coverage} which requires for every $x \in \mathcal{X}$ 
\begin{align}\label{full}
    \Pr\left(Y_{n+1} \in C\left(X_{n+1}\right) \mid X_{n+1}=x\right)=1-\alpha.
\end{align}
 
Despite the importance of achieving conditional guarantees, there are some fundamental limitations.  Prior work \cite{pmlr-v25-vovk12, 2019arXiv190304684F, lei2014distribution} has shown that it is impossible to construct nontrivial prediction sets with distribution-free, full conditional coverage when we have access to a finite-size calibration set. Consequently, relaxations of \eqref{full} have been considered. For instance, \cite{gibbs2023conformal,Tibshirani2019ConformalPU} develop frameworks to guarantee coverage under a predefined class of covariate shifts.  Another line of work \cite{Romano2020With, jung2023batch, Barber2019TheLO} considers predefined groups of the covariates and guarantees coverage conditioned on those groups. A more detailed discussion on the existing methods and their implications is provided in the related works section \ref{Rel}.

In this paper, we take a new approach and, instead of considering predefined structures, propose to \emph{learn} structures from the calibration data    that are \emph{informative about uncertainty quantification}. Our algorithmic framework aims at learning such structures in conjunction with constructing the prediction sets in an iterative fashion. To better illustrate our approach and contributions, we will proceed with the following toy example. 

\begin{figure}[ht] 
\centering
\subcaptionbox{}{\includegraphics[width=0.49\columnwidth]{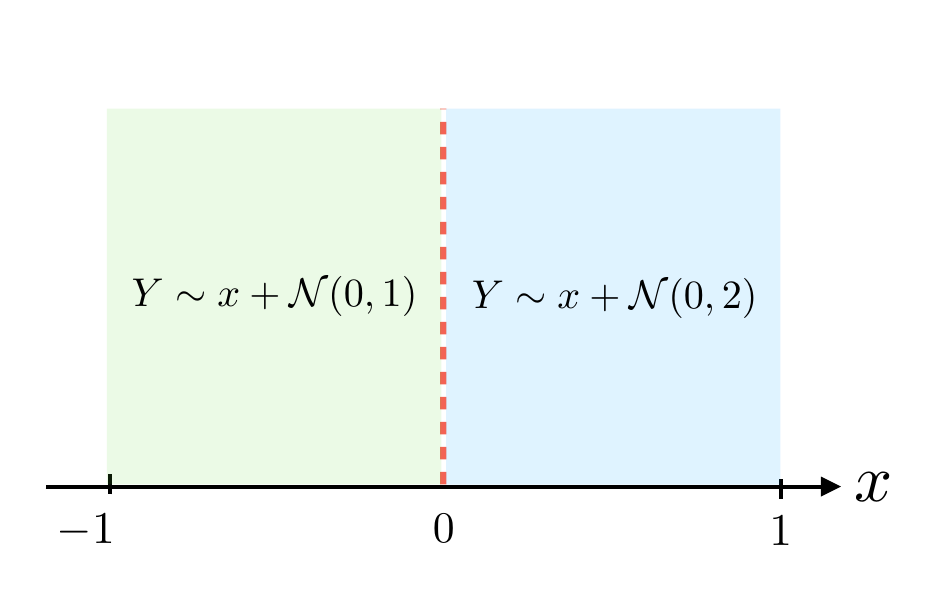}}
\hfill 
\subcaptionbox{}{\includegraphics[width=0.49\columnwidth]{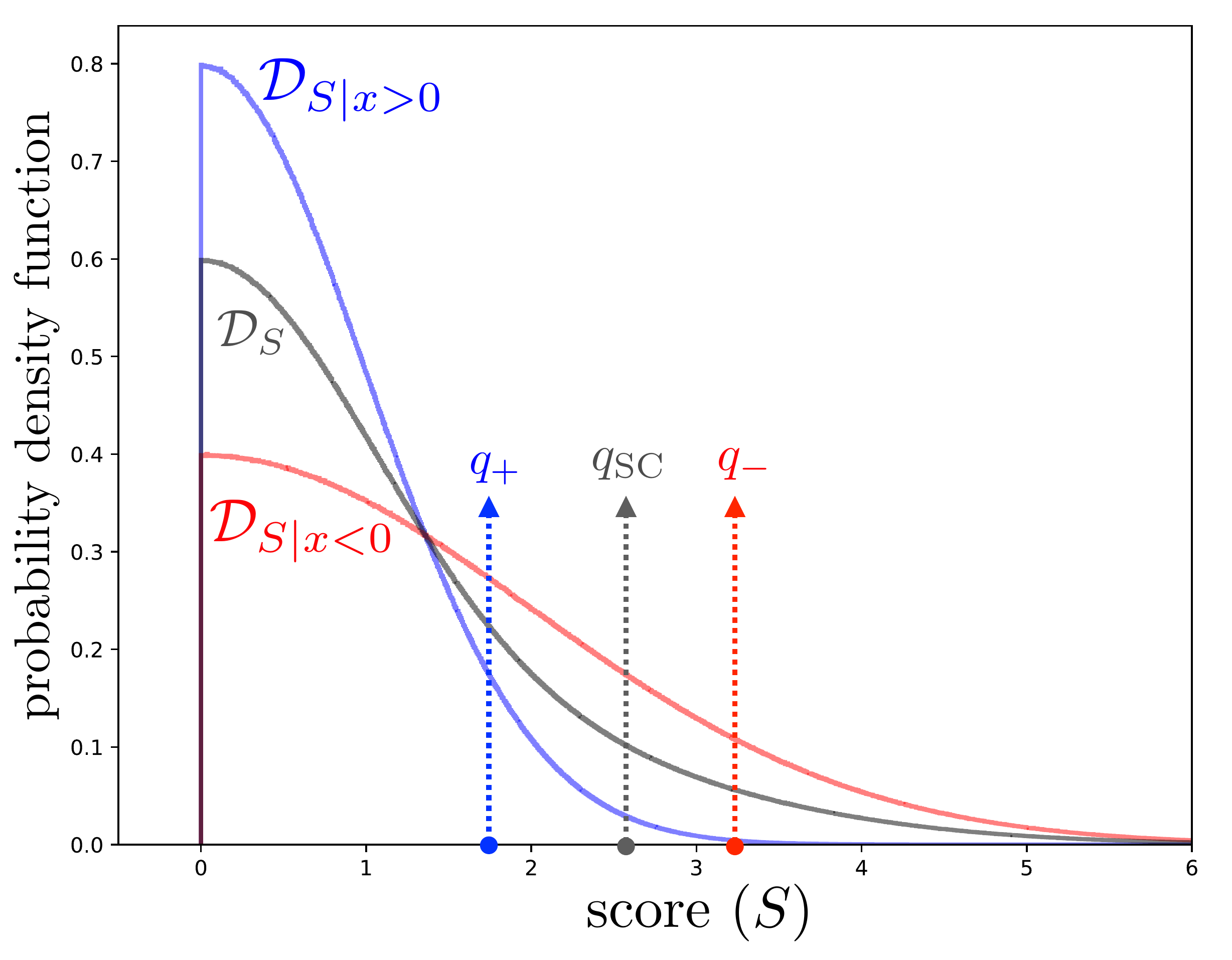}}

\vskip 0.1in 

\subcaptionbox{}{\includegraphics[width=0.50\columnwidth]{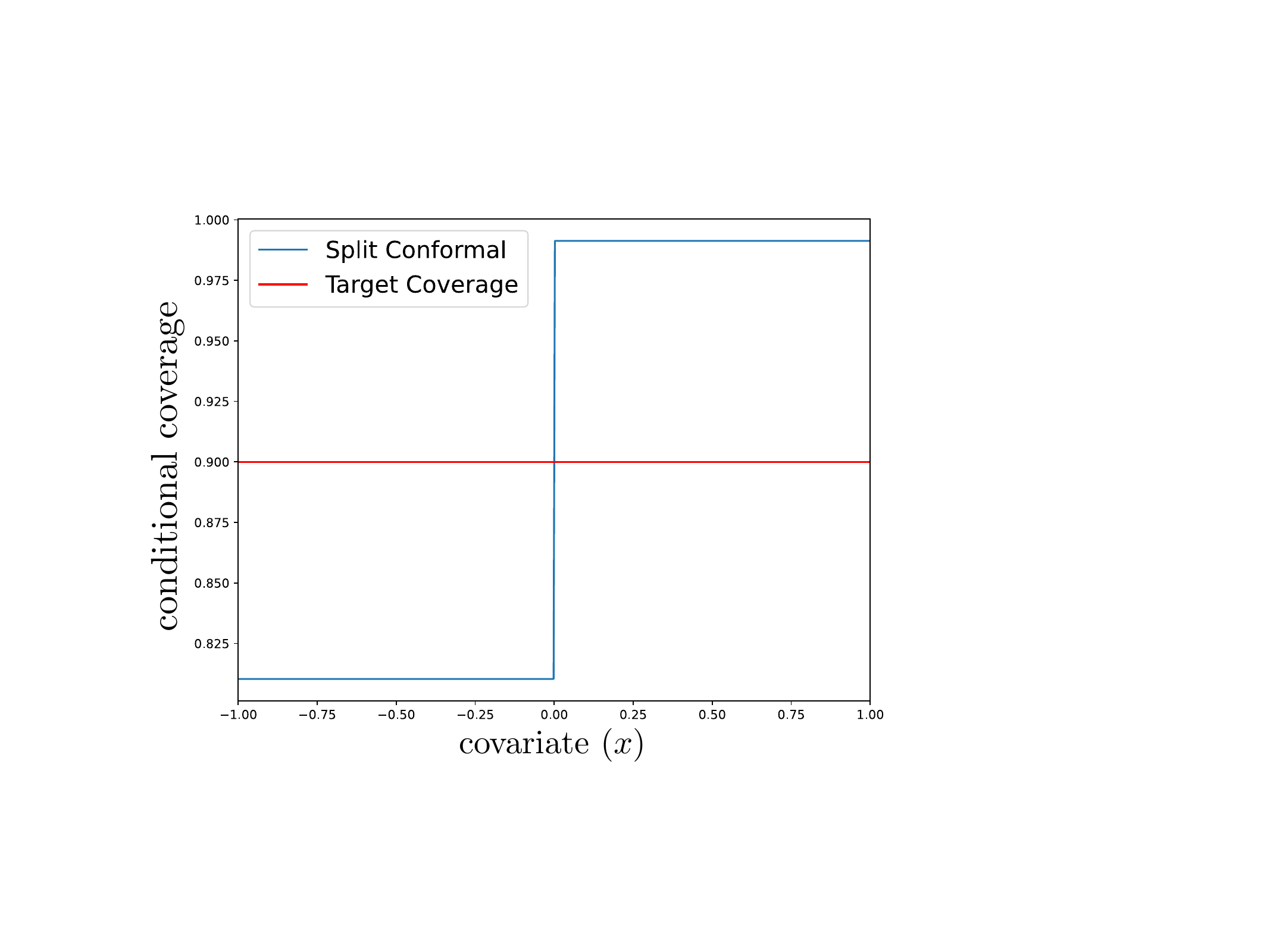}}
\hfill 
\subcaptionbox{}{\includegraphics[width=0.48\columnwidth]{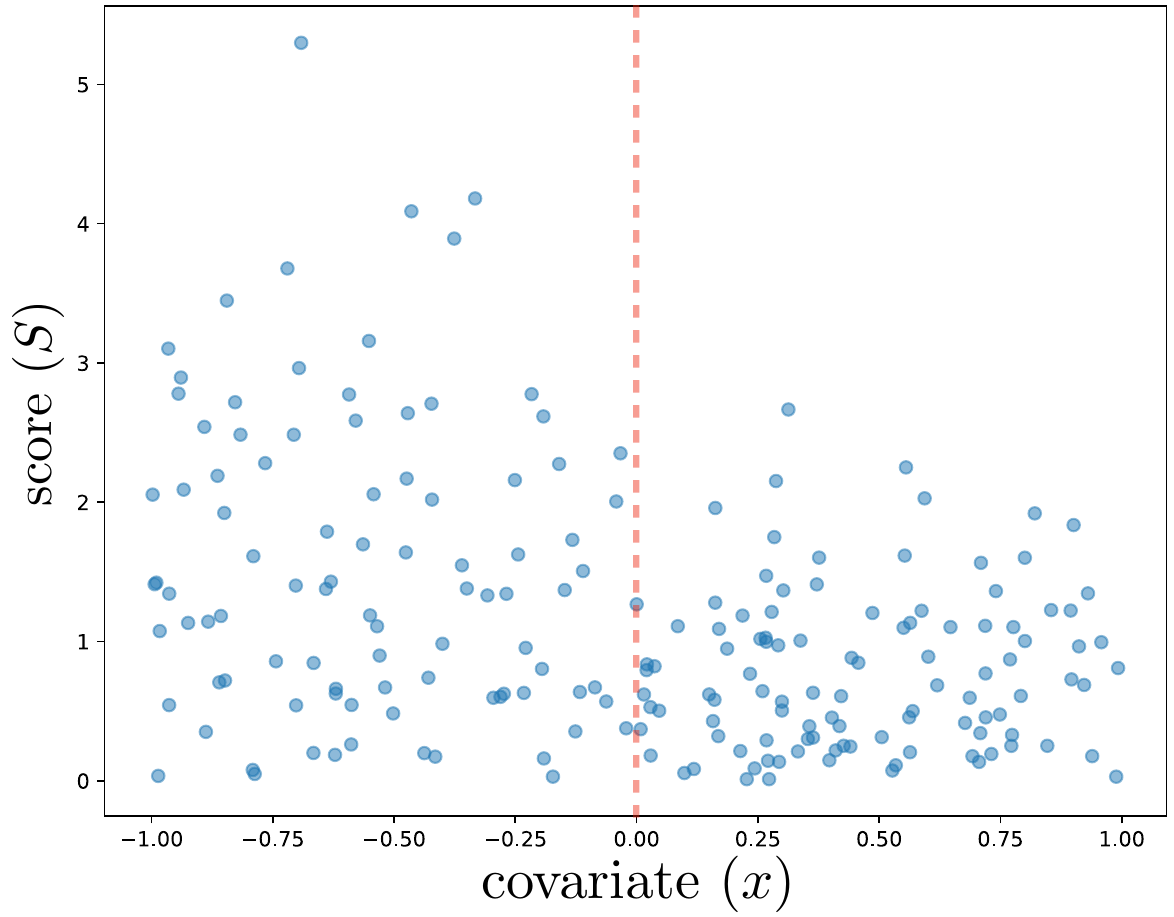}}

\caption{(a) Distribution of the labels conditioned on the covariate ($x$). (b) Conditional and marginal distributions of the score. (c) Coverage of the prediction sets of Split Conformal conditioned on $x$. (d) Samples of the form $(x_i,s_i)$. From these samples, we aim to learn the partition/feature $h$ of the covariate space shown by the dashed red line.}
\label{fig:example}
\end{figure}





\noindent\textbf{Example.} Let $\mathcal{X} = [-1,+1]$ and assume that the covariate $X$ is uniformly distributed over $\mathcal{X}$. The label $Y$ is distributed as follows (see also Figure~\ref{fig:example}-(a)): 
\begin{itemize}
\item[-] If $x < 0 $, then $Y \sim x +   \mathcal{N}(0,1)$,
\item[-] If $x \geq 0$, then $Y    \sim  x + \mathcal{N}(0,2)$.
\end{itemize}
For simplicity, we assume in this example that we have infinitely many data points available from this distribution. It is easy to see that the optimal\footnote{Here, optimality is measured in terms of the mean squared error.} regression model for this  distribution is $ f(x)=\mathbb{E}[Y \mid X = x] = x$. As a result, considering the conformity score $S(x,y) = | y - f(x) | $, the distribution of $S$ follows the folded Gaussian distribution: i.e. for $x < 0$ we have $\mathcal{D}_{S | x} = |N(0,1)|$, and for $x \geq 0$ we have $\mathcal{D}_{S | x} = |N(0,2)|$. Figure~\ref{fig:example}-(b) depicts the conditional and marginal distributions of $S$  with their $0.9$-quantiles. 

In this setting, the standard Split Conformal method only looks at  the marginal distribution of $S$ and constructs the prediction sets according to its quantile value $q_{\rm SC}$. This choice of prediction sets, although being marginally valid as in \eqref{marginal}, will lead to over-coverage when $x < 0$, and under-coverage when $x \geq 0$ (see Figure~\ref{fig:example}-(c)). For this specific example, this limitation can be overcome by constructing the prediction sets based on the sign of the covariate. Let $q_+$  (rep. $q_{-}$) be the $(1-\alpha)$-quantile of the distribution of $S$ conditioned on $x \geq 0$ (rep. $x < 0$). Then the following prediction sets will provide a valid coverage on both positive and negative covariates (in fact, here we have full conditional coverage as in \eqref{full}):   
\begin{itemize}
\item[-] If $x < 0 $, then $C(x) = \{y \in \mathbb{R} \,\text{ s.t. } \, S(x,y) \leq q_- \}$,
\item[-] If $x \geq 0$, then $C(x) = \{y  \in \mathbb{R} \,\text{ s.t. } \, S(x,y) \leq q_+ \}$.
\end{itemize} 
A few points about the above example are in order: (i) This example showcases how we can obtain richer, conditionally-valid prediction sets by using an appropriate partitioning of the covariate space, and constructing the prediction sets according to the partitions. Put differently, the prediction sets were obtained using a new feature, $h(x) = \text{sign}(x)$, of the covariates. (ii) Further, in the above example, $f(x) = x$ is the optimal pointwise prediction of the label. Thus, the feature $h$ is not useful in obtaining a more accurate pointwise estimate of the label; but rather, it is useful in quantifying the \emph{uncertainty} in estimating the label. (iii) Finally, in reality informative features  such as $h$ are not known apriori and they have to be \emph{learnt} from data. This is a challenging task as we need to find features, or boundaries in the covariate space, that can separate points $x,x' \in \mathcal{X}$ based on how different the conditional distributions $\mathcal{D}_{S \mid x}$ and $\mathcal{D}_{S \mid x'}$ are. Yet, these conditional distributions are not known and only a finite number of samples of the form $(x_i, s_i)$ are given (see Figure~\ref{fig:example}-(d)).  

In light of the example above, our goal is to identify, in a data-driven manner, a partitioning of the covariate space such that points in the same partition share some similarities in terms of their prediction sets. For example, for two points in the same partition, we would like their respective quantile of the conditional distribution of the score to be close to each other. The main challenge here is that the distributions are unknown and only a finite-size calibration set is given.

\textbf{Algorithmic Contributions.} We formalize the problem of learning such uncertainty-guided partitions in Section~\ref{algorithm}. We will first derive an optimization objective that measures the quality of a partitioning based on how far the conditional quantiles of the points in that partition differ from each other. Our main algorithm, PLCP, optimizes this objective by iteratively updating the partitioning and prediction sets over a given calibration data set. The partitioning is chosen over a function class, e.g. linear functions or neural networks.
In this sense, PLCP can systematically utilize off-the-shelf machine learning models very efficiently to construct the prediction sets.

\textbf{Theoretical Contributions.} We introduce the notion of ``Mean Squared
Conditional Error (MSCE)'' defined as 
    \begin{align}\label{MSCE}\text{MSCE}(\mathcal{D}, \alpha, C) = \E\left[ \left(\text{cov}(X) - (1 - \alpha) \right)^2 \right]\quad \end{align}
    $$\text{ where: }\quad\text{cov}(x) = \Pr[Y \in C(X) \mid X = x],$$
which measures the deviation of  prediction sets $C(x)$ from \eqref{full}. For the ease of notation, we will drop the arguments inside $\text{MSCE}(\mathcal{D}, \alpha, C)$ and simply use MSCE. 
In section \ref{thm}, we establish two insightful theoretical guarantees for the prediction sets constructed by PLCP. In the infinite data regime, we demonstrate that the MSCE of PLCP's prediction sets scales as $O(\frac{1}{\sqrt{m}})$, where $m$ denotes the number of regions in the learned partitioning of the covariate space. Hence, as the number of partitions increases, the deviation from the nominal coverage, $1-\alpha$, diminishes across the covariate space. Further, we present a finite sample guarantee for the MSCE of the prediction sets obtained by PLCP. We show that the MSCE scales as $O\left(\sqrt{\frac{m\log{n} + \text{complexity}(\mathcal{H})}{n}} + \frac{1}{\sqrt{m}}\right)$, with high probability, where $\mathcal{H}$ represents the class of functions used by PLCP, and $\text{complexity}(\mathcal{H})$ quantifies its complexity (i.e. the covering number). Such PAC-style guarantees are common in the conformal prediction literature \cite{pmlr-v25-vovk12, Park2020PAC, jung2023batch}. Finally, at the end of the section we provide implied coverage guarantees (both marginal and conditional) for PLCP.  

\textbf{Experimental Results.} 
Across four diverse datasets and tasks, we compared our method with established approaches such as Split Conformal \cite{Lei2016DistributionFreePI, Papadopoulos2002InductiveCM}, CQR \cite{romano2019conformalized}, LocalCP \cite{hore2023conformal}, Batch-GCP \cite{jung2023batch}, and Conditional Calibration \cite{gibbs2023conformal}. PLCP consistently outperformed Split Conformal in terms of conditional coverage and interval length. Unlike BatchGCP, which relies on predefined  groups (e.g., race, gender), our method requires no such prior knowledge. In the experiments, PLCP matched BatchGCP's performance on known groups and effectively identified and covered additional meaningful groups. Compared to Conditional Calibration, PLCP achieved comparable coverage but with notably shorter prediction intervals. This success is attributed to PLCP's effective integration of advanced machine learning models to extract relevant features for uncertainty quantification.

\subsection{Related Work} \label{Rel}
We begin by highlighting some of the key developments in obtaining conditional guarantees. 

\textbf{Group-Conditional Methods.} These methods consider a \emph{predefined} collection of groups $\mathcal{G}$ within the covariate space and guarantee coverage conditioned on each group $g \in G$ \cite{jung2023batch, Barber2019TheLO, vovk2003mondrian, javanmard2022prediction}. Our approach departs significantly from such methods as it  \emph{learns} a partitioning of the covariate space directly from the calibration data, not pre-establishing them, in order to identify diverse behaviors in $\Pr[Y|X]$. This distinction is crucial in cases with limited knowledge of the uncertainty of the pre-trained predictive model on varying subpopulations. 


\textbf{Covariate Shifts.} An alternative approach to relax \eqref{full} is to provide conditional guarantees with respect to a pre-defined family of covariate shifts  \cite{gibbs2023conformal, Tibshirani2019ConformalPU, cauchois2023robust}. The set of covariate shifts are fixed prior to the calibration phase, and hence they may not capture covariate shifts tied to the uncertainty of the pre-trained predictive model. \citet{gibbs2023conformal} aims to mitigate this by expanding the covariate shift space, but this can lead to conservatism, creating unnecessarily large prediction sets, as pointed out by \citet{hore2023conformal}. In contrast, our method learns the uncertainty patterns of the predictive model during calibration, leading to smaller and more precise prediction sets as demonstrated in Section~\ref{exp}. We provide a detailed comparison with Conditional Calibration \cite{gibbs2023conformal} in Remark \ref{candes}, in the Appendix.


\textbf{Designing Better Conformity Scores.} An alternative approach in the conformal inference literature to improve conditional validity is to design new conformity scores \cite{Lei2016DistributionFreePI, romano2019conformalized, chernozhukov2021distributional, deutschmann2023adaptive, feldman2021improving, romano2020classification}. These scores aim to better encapsulate the complexities inherent in $\Pr[Y|X]$ in different applications. In contrast, in the pipeline of conformal prediction, our method applies after selecting the conformity score. This would enable us to capture more sophisticated uncertainty patterns of the pre-trained predictive model. Nonetheless, score-selection methods can potentially be integrated into our framework.

\textbf{Further Works.}
\citet{Guan2021LocalizedCP,hore2023conformal} aim at relaxing \eqref{full} using a predefined pairwise similarity function on the covariate space. The choice of these functions is generally heuristic and rather less dependent on the data. Moreover, \citet{pmlr-v108-izbicki20a, leroy2021md, izbicki2022cd,amoukou2023adaptive} aim to estimate the distribution of scores using calibration data. Such methods can become inapplicable in modern high-dimensional datasets as accurate estimation of the whole conditional distribution of scores is not possible from calibration data. In contrast to all these works, we aim at systematically learning low-dimensional features from the calibration data that characterizes the amount of uncertainty in the labels. In section \ref{exp}, we showcase PLCP's ability to success at high dimensional datasets including image data. On a parallel thread, other works also looked at class-conditional (conditioned on $Y$) coverage guarantees \cite{ding2024class, si2023pac}. While both problems, i.e. class conditional coverage and covariate conditional coverage, are pertinent to the field of conformal prediction, they represent distinct challenges and have developed along separate trajectories.

\section{Algorithm}\label{algorithm}
\textbf{Preliminaries and Notations.} Recall that given a calibration set $\{(X_i, Y_i)\}_{i=1}^n$ consisting of i.i.d. samples from a distribution $\mathcal{D}$ on $\mathcal{X} \times \mathcal{Y}$, and a (pre-trained) model $f$, conformal prediction aims to construct for every input $x \in \mathcal{X}$ a prediction set $C(x)$ that is guaranteed to cover the true label $y$ with high probability. These sets are often constructed using a given conformity score $S: \mathcal{X} \times \mathcal{Y} \rightarrow \mathbb{R}$. For example, one commonly used conformity score for regression tasks is $S(x,y) = | y - f(x) |$. Define the random variable $S = S(X,Y)$, where $(X,Y) \sim \mathcal{D}$. To keep the notation simple, we use the notation $\mathcal{D}$ to also refer to the distribution of $(X,S)$.  Furthermore, we use $\mathcal{D}_S$ to denote the marginal distribution of $S$, and $\mathcal{D}_{S | x}$ to denote the conditional distribution of $S$ given an input $x \in \mathcal{X}$. Denoting $S_i = S(X_i, Y_i)$, note that  $\{(X_i, S_i)\}_{i=1}^n \stackrel{\text{i.i.d.}}{\sim} \mathcal{D}$.

One standard approach to conformal prediction is the so-called Split Conformal method (SC), which lets 
$$C_{\rm SC}(x) = \{ y \in \mathcal{Y} | S(x,y) \leq q_{\rm SC}\},$$ where $q_{\rm SC}$ is the $\lceil (1-\alpha) n\rceil$-th largest element in the set $\{S_1, S_2, \cdots, S_n, \infty\}$. The quantity $q_{\rm SC}$ can be considered as an estimate of the $(1-\alpha)$-quantile of the marginal distribution of $S$, i.e. $\mathcal{D}_S$.  In this way, the prediction sets are mainly constructed based on the distribution $\mathcal{D}_S$, and the information from the covariates $X_i$ is not used directly. As a consequence, the Split Conformal method (and its variants) can only provide \emph{marginal} coverage guarantees.

In what follows, we will use the pinball loss defined as 
\begin{equation} \label{pinball_loss}
\ell_\alpha(q, s)= 
\begin{cases}
\alpha(q-s) & \text{if } q \geq s, \\
(1-\alpha)(s-q) & \text{if } q < s.
\end{cases}
\end{equation}
It is well known that, for random variable $S\sim \mathcal{D}_S$, minimizing the expected pinball loss over $q$ yields the $(1-\alpha)$-quantile of the distribution, which we denote by $q_{1-\alpha}(S)$.  I.e. $q_{1-\alpha}(S) =  \underset{q\in \R}{\text{argmin}}\: \E_{S \sim \mathcal{P}}\ell_\alpha(q, S)$. 

We will use bold symbols like $\bq$ for vectors. For any $\bq = (q_1, q_2, \cdots, q_m) \in \mathbb{R}^m$ and probability vector $\bp = (p_1, \cdots, p_m)$, we will use the notation $\bq_{i\sim\bp}$, to point to the random variable that takes the value $q_i$ with probability $p_i$.

\textbf{Algorithmic Principles.} As mentioned in Sec.~\ref{sec: introduction}, at a high level, our main goal is to learn a partitioning of the covariate space $\mathcal{X}$, which is representative of the uncertainty structure of the label, and construct the prediction sets accordingly. For instance, assume that we would like to partition $\mathcal{X}$ into two groups. Intuitively speaking, we would like one of the groups to include points $x$ such that $\mathcal{D}_{Y | x}$ is ``more'' noisy, and the other group to include points $x$ such that $\mathcal{D}_{Y | x}$ is ``less'' noisy. For more noisy $x$'s, the distribution $\mathcal{D}_{S | x}$ takes higher values, and hence its $(1-\alpha)$-quantile is high. And for the less noisy $x$'s, the $(1-\alpha)$-quantile of  $\mathcal{D}_{S | x}$ is low. 

To better illustrate the algorithmic principles, let us first assume that we have infinite data. We will shortly focus on the finite-size setting for which our algorithm is designed. 
We would like to partition the covariate space $\mathcal{X}$ into $m$ groups $G_1, \cdots, G_m$.
Fix a group $G_i$. The prediction set for each $x \in G_i$ will take the same form, i.e. $S(x,y) \leq q_{i}$. Hence, our goal will be to find both the groups $G_i$ and the values $q_i$ (i.e. the prediction sets). 

 Let us first see how $q_i$'s could be found depending on the choice of $G_i$'s. In order to guarantee coverage conditioned on the group $G_i$, $q_i $ can simply be chosen as the $(1-\alpha)$-quantile of the distribution of $S$ conditioned on $G_i$, i.e.
 \begin{equation} \label{q_vs_G}
 q_i = q_{1-\alpha}(S | X \in G_i).
 \end{equation}
 Thus, fixing the groups $G_1, \cdots, G_m$, one can compute the values $q_i$ as the corresponding quantiles and construct the prediction sets:
 \vspace{-.2cm}
\begin{equation}
C(x)= 
\begin{cases}
\{y\in\mathcal{Y}| S(x,y)\leq q_{1}\} & \text{if } x \in G_1, \\
\quad\quad\quad\vdots & \quad\vdots\\
\{y\in\mathcal{Y}| S(x,y)\leq q_{m}\} & \text{if } x \in G_m.
\end{cases}
\vspace{-.1cm}
\end{equation}
Now, assume that the values $q_1, \cdots, q_m$ are given and we would like to find the groups $G_i$ accordingly. 
Ideally, we want to find the groups $G_i$ in a way that if $x \in G_i$, then the $(1-\alpha)$-quantile of the conditional distribution $\mathcal{D}_{S|x}$, i.e. $q_{1-\alpha}(S | X = x)$, is very close to $q_i$. But this may not be possible given the specific values of $q_i$'s. Instead, our key insight is to assign $x$ to the group $G_i$ whose associated value $q_i$ is closest to $q_{1-\alpha}(S | X = x)$. To quantify closeness, note that $q_{1-\alpha}(S | X = x)$ is written using the pinball loss \eqref{pinball_loss}: 
\begin{align*} 
q_{1-\alpha}(S|X=x) = 
\underset{q\in \R}{\text{argmin}}\: \E_{ S|X=x}\ell_\alpha(q, S) \quad \forall x\in \mathcal{X}.
\end{align*}
Accordingly, the quantity $\E_{ S|X=x}  \ell_\alpha(q, S) $ measures how close a value $q$ is to the quantile $q_{1-\alpha}(S|X=x)$. This measure can also be interpreted in terms of conditional coverage. Note that 
$\frac{d}{d q} \E_{ S|X=x} \left[ \ell_\alpha(q, S) \right] = \Pr\{ S(X,Y) \leq q | X = x\} - (1-\alpha)$. As a result, $\E_{ S|X=x} \left[ \ell_\alpha(q, S) \right]$ can be considered as a measure of miscoverage for the prediction set $S(x,y) \leq q$. Using this measure, the closest point $q_{i^*}$ to $q_{1-\alpha}(S | X = x)$ can be found as
\begin{align}\label{G_vs_q}
    i^* = \underset{i \in [1, \cdots, m]}{\text{argmin}} \E_{ S|X=x}\ell_\alpha(q_i, S).
\end{align}
In summary, we have identified two algorithmic principles to derive the groups $G_i$ and the values $q_i$ (i.e. the prediction sets). Given the groups $G_i$, choose $q_i$ using \eqref{q_vs_G}; and given the $q_i$'s, for any $x \in \mathcal{X}$ assign its group according to \eqref{G_vs_q}.  

\textbf{The Algorithm.} We will now proceed with implementing principles \eqref{q_vs_G} and \eqref{G_vs_q} in the finite-size setting. One way to do this is to derive an equivalent optimization objective which admits an unbiased estimate using finite samples. 

Consider the following optimization problem over the variable $\bq = (q_1, q_2, \cdots, q_m) \in \mathbb{R}^m$:
\begin{align}\label{primary}
\vspace{-.2cm}
    \bq^\infty = \underset{\bq = (q_1, \cdots, q_m) \in \R^m}{\text{argmin}}\; \E_X \left[ \underset{i \in [1, \cdots, m]}{\text{min}}\E_{ S|X=x}\ell_\alpha(q_i, S) \right].
    \vspace{-.2cm}
\end{align}
The above optimization problem is convex as the pinball loss is convex. Further, by defining $G^{\infty}_i$ according to \eqref{G_vs_q}: 
\begin{equation}
G^{\infty}_i = \left\{x \in \mathcal{X} \text{ s.t. } i = \underset{t \in [1, \cdots, m]}{\text{argmin}} \E_{ S|X=x}\ell_\alpha(q^\infty_t, S)  \right\},
\end{equation}
it is easy to see that the  pair $\bq^\infty$ and $\{G_i^\infty\}_{i=1}^m$ satisfy both \eqref{q_vs_G} and  \eqref{G_vs_q}. Hence, we will focus on \eqref{primary}.  Let us rewrite \eqref{primary} in a slightly different but equivalent form. Let $\Delta_m$ be the $m$-dimensional simplex; i.e.  the set of all the probability vectors $\bp = (p_1, \cdots, p_m) \in \mathbb{R}^m$. We have
\vspace{-.2cm}
\begin{align*}
   \underset{i \in [1, \cdots, m]}{\text{min}} \!\E_{S|X=x}\ell_\alpha(q_i, S) \!  = \!\!
   \underset{\bp \in \Delta_m}{\text{min}} \sum_{i=1}^m p_i\E_{ S|X=x}\ell_\alpha(q_i, S).
\end{align*}
\vspace{-.1cm}
Using the above relation, we can rewrite \eqref{primary} as:
\vspace{-.2cm}
\begin{align*}
    \bq^\infty&=\underset{\bq\in \R^m}{\text{argmin}}\; \E_X \underset{\bp \in \Delta_m}{\text{min}} \sum_{i=1}^m p_i\E_{ S|X=x}\ell_\alpha(q_i, S)\\
    &= \underset{\underset{h: \mathcal{X} \to \Delta_m}{\bq\in \R^m}}{\text{argmin}}\; \E_X  \sum_{i=1}^m h^i(x)\E_{ S|X=x}\ell_\alpha(q_i, S)\\
    &= \underset{\underset{h: \mathcal{X} \to \Delta_m}{\bq\in \R^m}}{\text{argmin}}\; \E_{(X, S)}  \sum_{i=1}^m h^i(x)\ell_\alpha(q_i, S).
    \vspace{-.2cm}
\end{align*}
Here, in the second and third step the minimization is over $\bq$ and all the functions $h = (h^1, \cdots, h^m) : \mathcal{X} \to \Delta_m$.  The second equality follows from the fact that the function 
\vspace{-.1cm}
$$\bp^\infty(x)=\underset{\bp \in \Delta_m}{\text{argmin}} \sum_{i=1}^m p_i\E_{ S|X=x}\ell_\alpha(q_i, S),$$ 
is a mapping from $\mathcal{X}$ to $\Delta_m$. Our  optimization problem can thus be written as
\vspace{-.2cm}
\begin{align}\label{secondary}
    h^\infty, \bq^\infty &= \underset{\underset{h: \mathcal{X} \to \Delta_m }{\bq\in \R^m}}{\text{argmin}} \underset{(X, S) \sim \mathcal{D}}{\mathbb{E}}\left[\sum_{i=1}^m h^i(X)\ell_\alpha(q_i, S)\right].
    \vspace{-.2cm}
\end{align}
With finite-size data, we  perform the following two common relaxations on the above objective: (i) Replace the objective with its empirical version (i.e. replace the expectation with the sum over the calibration data); (ii) Instead of optimizing over all the functions $h: \mathcal{X} \to \Delta_m$, which clearly overfits in the finite-size setting, we optimize over a function class $\mathcal{H}$. E.g., $\mathcal{H}$ could be the class of linear functions or neural networks with a soft-max layer at the output. Our final optimization problem, using the finite-size calibration set $\{(X_i, S_i)\}_{i=1}^n$ and function class $\mathcal{H}$,  becomes: 
\begin{align}\label{final}
    h^*, \bq^* &= \underset{\underset{h \in \mathcal{H} }{\bq\in \R^m}}{\text{argmin}} \frac{1}{n}\sum_{j=1}^n \sum_{i=1}^m h^i(X_j)\ell_\alpha(q_i, S_j).
\end{align}
Partition Learned Conformal Prediction (PLCP) algorithm is formulated utilizing \eqref{final}, in Algorithm \ref{PLCP}.
\begin{algorithm}
\caption{Partition Learned Conformal Prediction (PLCP)}
\begin{algorithmic}[1]
    \REQUIRE Data: $\{(X_i, Y_i)\}_{i=1}^n$, Conformity score: $S(x, y)$, Number of groups: $m$, Family of functions: $\mathcal{H}$
    
    \STATE Compute $S_i = S(X_i, Y_i),\quad \forall i \in [1, \cdots, n]$.
    \STATE Solve the optimization problem,\\ 
        $$h^*, \bq^* =  
        \underset{\underset{h \in \mathcal{H}}{\bq\in \R^m}}{\text{argmin}} \frac{1}{n}\sum_{j=1}^{n} \sum_{i=1}^{m} h^i(X_j) \ell_{\alpha}(q_i, S_j).$$
    \ENSURE Prediction set $C^*(x) = \{ y \mid 
    S(x, y) \leq \bq^*_{i\sim h^*(x)}\}$
\end{algorithmic}
\label{PLCP}
\end{algorithm}

\begin{remark}
    Oftentimes in practice, the machine learning models, such as Neural Networks, are parametric ($h_\theta$, where $\theta$ is the set of parameters). In that case, we can simply implement PLCP with alternating gradient descent, performing few steps of gradient descent on $q$ and $\theta$ at each iteration.
\end{remark}



\vspace{-.1cm}
\section{Theoretical Results}\label{thm}
We introduced in \eqref{MSCE} a new notion to measure conditional coverage called the mean squared coverage error (MSCE). In words, MSCE penalizes the deviation of the coverage of prediction sets, conditioned on each point $x$, from the nominal value $1-\alpha$. Further, as we will show, any bound on the MSCE can be simply translated into a bound on coverage. 
As a result, minimizing MSCE can be a valid objective as a relaxation of \eqref{full}. In what follows, we provide guarantees for the MSCE of PLCP in the presence of finite and infinite data. Fallback coverage guarantees will also be provided at the end of this section. 
All the proofs has been moved to the Appendix \ref{Proofs}. First, let us introduce our main assumptions. 

\begin{assumption}\label{ass:iid}
The set $\{(X_i, Y_i)\}_{i=1}^n$ are generated  i.i.d. 
\end{assumption}

\begin{assumption}\label{ass:bounded}
    The conformity score $s(.,.)$ should be bounded. Without loss of generality, we can assume~$s(\cdot,\cdot)\in [0, 1]$.
\end{assumption}

\begin{definition}
    A distribution $\mathcal{P}$, is called $L$-lipschitz if we have for every real valued numbers $q \leq q^{\prime}$,
        $$\Pr_{X \sim \mathcal{P}}\left(X \leq q^{\prime}\right)-\Pr_{X \sim \mathcal{P}}(X \leq q) \leq L\left(q^{\prime}-q\right).$$
\end{definition}

\begin{assumption}\label{ass:reg}
    The conditional distribution $\mathcal{D}_{S|X}$ should be $L$-Lipschitz, almost surely with respect to $\mathcal{D}_{X}$.
\end{assumption}
Assumption \ref{ass:iid} is often necessary to obtain concentration guarantees and has been used in the literature of  of conformal prediction \cite{Sesia2021ConformalPU, jung2023batch, Lei2012DistributionFP, Guan2021LocalizedCP}. These works have also considered regularity assumptions similar to or stronger than \ref{ass:reg}. Also, Assumption \ref{ass:bounded} has been used in the same references. 
Note that the above  assumptions do not compromise the distribution-free nature of conformal prediction. Given that obtaining distribution-free guarantees as in \eqref{full}  is impossible \cite{pmlr-v25-vovk12, 2019arXiv190304684F}, the adoption of regularity conditions to extend beyond mere marginal coverage guarantees is a well-established path. 

\subsection{Infinite data}
At the heart of our analysis, we have Proposition \ref{lem1} that connects the pinball loss to the MSCE of prediction sets. To motivate this Proposition, let us look at the following prediction set created by the true quantile function, $q_{1-\alpha}(S|X=x)$,
$$
    C_{\rm opt}(x)=\{y\in \mathcal{Y} \mid S(X, Y)\leq q_{1-\alpha}(S|X=x)\}.
$$
This prediction set is optimal as it guarantees full conditional coverage \eqref{full}. 
The true quantile function can also be expressed as the minimizer of the pinball loss; i.e. defining the function $q_{1-\alpha}(x) := q_{1-\alpha}(S|X=x) $, we have
\begin{align}\label{quantilemin}
q_{1-\alpha}(\cdot) \in
\underset{f:\mathcal{X}\rightarrow\R}{\text{argmin}}\: \E_{(X, S) \sim \mathcal{D}}\ell_\alpha(f(X), S).
\end{align}
This suggests that minimizing pinball loss can potentially lead to prediction sets with a better conditional coverage behaviour -- an intuition used in prior work, such as \cite{jung2023batch, gibbs2023conformal},  to obtain conditional guarantees using the pinball loss.  In the following proposition, we formalize this intuition, by bounding the MSCE of prediction sets constructed by an arbitrary function $g(x) : \mathcal{X \rightarrow \R}$. 

\begin{propo}\label{lem1}
Under assumption \ref{ass:reg}, for every function $g(x) : \mathcal{X \rightarrow \R}$, we have
\begin{align*}
    \text{MSCE}(C_g)\leq 
    &2L \E\left[\ell_{\alpha}(g(X), S) - \ell_{\alpha}(q_{1-\alpha}(X), S) \right],
\end{align*}
$$
\text{where:} \quad C_g(x) = \{y \in \mathcal{Y} | S(x, y) \leq g(x)\}.
$$
\end{propo}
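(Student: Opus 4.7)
The plan is to reduce the statement to a pointwise (in $x$) inequality and then integrate over the marginal of $X$. Fix $x \in \mathcal{X}$ and define the one-dimensional function $F(q) := \mathbb{E}_{S \mid X=x}[\ell_\alpha(q, S)]$. Since the pinball loss is convex in its first argument, $F$ is convex. A direct computation (differentiating under the expectation, as in the brief derivation already given in the excerpt) yields
\begin{equation*}
F'(q) = \Pr[S \leq q \mid X = x] - (1-\alpha),
\end{equation*}
and this derivative vanishes at $q_0 := q_{1-\alpha}(x)$. Observe that the Lipschitz hypothesis on $\mathcal{D}_{S\mid X}$ (Assumption~\ref{ass:reg}) is exactly the statement that $q \mapsto \Pr[S \leq q \mid X=x]$ is $L$-Lipschitz almost surely, hence $F'$ is $L$-Lipschitz, i.e.\ $F$ is $L$-smooth.

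Now I would invoke a standard Polyak--Łojasiewicz-style consequence of convexity plus $L$-smoothness: if $F$ is convex, $L$-smooth, and $F'(q_0)=0$, then for every $q$,
\begin{equation*}
F(q) - F(q_0) \;\geq\; \frac{1}{2L}\,\bigl(F'(q)\bigr)^2.
\end{equation*}
The one-line proof uses $L$-smoothness to bound $F(q - \tfrac{1}{L}F'(q)) \leq F(q) - \tfrac{1}{2L}(F'(q))^2$, and the left side is at least $F(q_0)$ since $q_0$ is a minimizer. Applying this with $q = g(x)$ and recalling that
\begin{equation*}
F'(g(x)) = \Pr[S \leq g(x) \mid X=x] - (1-\alpha) = \text{cov}(x) - (1-\alpha),
\end{equation*}
we obtain the pointwise bound
\begin{equation*}
(\text{cov}(x) - (1-\alpha))^2 \;\leq\; 2L\bigl(F(g(x)) - F(q_{1-\alpha}(x))\bigr).
\end{equation*}

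To finish, I would take expectation with respect to $X$ and use the tower property: the right-hand side integrates to $2L\,\mathbb{E}_{(X,S)}[\ell_\alpha(g(X), S) - \ell_\alpha(q_{1-\alpha}(X), S)]$, while the left-hand side is exactly $\text{MSCE}(C_g)$ by definition~\eqref{MSCE}. The only mildly delicate step is the non-smoothness of the pinball loss at $q=s$: $F$ is only differentiable almost everywhere in $q$, so one should technically work with one-sided derivatives or subgradients. Since the Lipschitz-CDF assumption guarantees $\mathcal{D}_{S\mid X=x}$ has no atoms, $F$ is in fact continuously differentiable and the argument goes through verbatim; otherwise, the same bound can be recovered by a small subgradient-calculus argument together with the same $L$-smooth minorization. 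This is the only technical subtlety I anticipate — the rest is a clean application of smoothness and convexity.
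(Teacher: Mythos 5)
Your proof is correct and follows essentially the same approach as the paper's: both define the conditional expected pinball loss $\gamma(x,q)=\mathbb{E}_{S\mid X=x}[\ell_\alpha(q,S)]$, identify its derivative $\gamma'(x,q)=\Pr[S\le q\mid X=x]-(1-\alpha)$ as the pointwise coverage gap, and use the $L$-Lipschitzness of this derivative (from Assumption~\ref{ass:reg}) to derive the pointwise quadratic lower bound $\gamma(x,q)-\gamma(x,q_{1-\alpha}(x))\ge \gamma'(x,q)^2/(2L)$ before integrating over $X$. The paper establishes the quadratic bound by explicitly integrating the Lipschitz inequality $\gamma'(x,\tilde q)\ge\gamma'(x,q)-L(q-\tilde q)$ over the interval from $q-\gamma'(x,q)/L$ to $q$ and then using that $q_{1-\alpha}(x)$ is a minimizer, which is precisely the descent-lemma-plus-minimizer argument you invoke, so the two proofs coincide step for step.
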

In light of Proposition \ref{lem1}, we proceed with analyzing the coverage of the prediction sets created by PLCP. In the first step, we look at the case that we have infinitely many data, and the function class $\mathcal{H}=\Delta^\mathcal{X}_m$. Here $\Delta^\mathcal{X}_m$ denotes all the possible functions from $\mathcal{X}$ to simplex $\Delta_m$. In this case we show that the performance of function $h^\infty(x)$ defined in \eqref{secondary}, in terms of expected pinball loss,  is different from the optimal quantile function $q_{1-\alpha}(S|X=x)$ by an additive error $O(1/\sqrt{m})$. 
\begin{theorem}\label{thm1}
    For any distribution $\mathcal{D}$ and number of groups $m$, we have the following bound,
\begin{align}\label{thm1state}
     &\bigg|\underset{h \in \Delta^\mathcal{X}_m, \bq \in \mathbb{R}^m}{\min} \underset{(X, S) \sim \mathcal{D}}{\mathbb{E}}\left[\sum_{i=1}^m h^i(X)\ell_\alpha(q_i, S)\right]\\ \nonumber
     &\quad\quad - \mathbb{E}\left[\ell_{\alpha}(q_{1-\alpha}(X), S)\right]\bigg| \leq 2\sqrt{\frac{{\rm{var}}(q_{1-\alpha}(X))}{m-1}},
\end{align}
where ${\rm var}(\cdot)$ denotes the usual variance.
\end{theorem}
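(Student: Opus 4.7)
The plan is to reduce the theorem to an $L^1$-quantization statement about the scalar random variable $Q := q_{1-\alpha}(X)$, and then to exhibit an explicit $\bq \in \R^m$ whose quantization error is controlled by a Chebyshev-type trade-off. First, the absolute value is cosmetic: because $q_{1-\alpha}(x)$ unconditionally minimizes $\E_{S|X=x}[\ell_\alpha(\cdot, S)]$ and $\sum_i h^i(x) = 1$, for every $(h,\bq)$ a pointwise swap yields $\E[\sum_i h^i(X)\ell_\alpha(q_i, S)] \geq \E[\ell_\alpha(q_{1-\alpha}(X), S)]$, so the quantity inside $|\cdot|$ is non-negative and only an upper bound is needed. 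Next, for fixed $\bq$ the inner minimization over $h:\mathcal{X}\to\Delta_m$ is pointwise on the simplex, with the optimum putting unit mass on the index that minimizes $\E_{S|X=x}[\ell_\alpha(q_i, S)]$, giving
$$\min_{h}\, \E\!\left[\sum_i h^i(X)\ell_\alpha(q_i, S)\right] \;=\; \E_X\!\left[\min_i \E_{S|X}[\ell_\alpha(q_i, S)]\right].$$
Since $\ell_\alpha(\cdot, s)$ is $\max(\alpha,1-\alpha)\leq 1$ Lipschitz in its first argument, $\E_{S|X=x}[\ell_\alpha(q_i,S)] - \E_{S|X=x}[\ell_\alpha(q_{1-\alpha}(x),S)] \leq |q_i - q_{1-\alpha}(x)|$, so minimizing over $i$ and integrating over $X$ reduces the theorem to showing
$$\min_{\bq \in \R^m}\, \E\!\left[\min_{i} |q_i - Q|\right] \;\leq\; 2\sqrt{{\rm var}(Q)/(m-1)}.$$

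For this $L^1$-quantization bound I would argue by explicit construction. Let $\mu = \E[Q]$, $\sigma^2 = {\rm var}(Q)$, and for a free parameter $k > 0$ place $q_1, \ldots, q_m$ as $m$ equally-spaced points in $[\mu - k\sigma,\,\mu + k\sigma]$ with the endpoints included, giving spacing $2k\sigma/(m-1)$. On the event $\{|Q-\mu| \leq k\sigma\}$ the nearest-neighbor distance is at most $k\sigma/(m-1)$, and off that event it is at most $(|Q-\mu| - k\sigma)_+$. The layer-cake identity $\E[(|Q-\mu| - k\sigma)_+] = \int_{k\sigma}^\infty \Pr[|Q-\mu| > t]\,dt$ combined with Chebyshev's inequality $\Pr[|Q-\mu| > t] \leq \sigma^2/t^2$ bounds the tail contribution by $\sigma/k$, so
$$\E[\min_i |q_i - Q|] \;\leq\; \frac{k\sigma}{m-1} \;+\; \frac{\sigma}{k}.$$
Balancing the two terms with $k = \sqrt{m-1}$ gives exactly $2\sigma/\sqrt{m-1} = 2\sqrt{{\rm var}(Q)/(m-1)}$, as required.

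The main obstacle is genuinely this trade-off: a narrower central grid improves the main-term resolution but inflates the Chebyshev tail, while a wider grid wastes points on low-probability regions. The resulting $1/\sqrt{m}$ rate — rather than the $1/m$ rate one could hope for with a bounded $Q$ — is exactly what emerges when only a second moment of $Q$ is assumed and no regularity of its law is available. Everything else in the argument (the $h$-reduction, the Lipschitz step, and the free lower bound from $q_{1-\alpha}$ being the unconditional minimizer of the pinball loss) is routine bookkeeping.
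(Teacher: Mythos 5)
Your proof is correct and follows essentially the same route as the paper's: reduce, via pointwise minimization over $h$ and the $1$-Lipschitz property of the pinball loss, to an $L^1$-quantization bound for $Q := q_{1-\alpha}(X)$, then exhibit an equispaced grid on a symmetric window about $\E[Q]$ and balance the in-window resolution $k\sigma/(m-1)$ against a variance-controlled tail, optimizing at $k=\sqrt{m-1}$. The only cosmetic difference is in the tail estimate — you truncate and use layer-cake plus Chebyshev to get $\sigma/k$, whereas the paper directly applies the Markov-type inequality $\mathrm{Var}(Q)\ge \eta\int_{\{|Q-\mu|\ge\eta\}}p(x)\,|q_{1-\alpha}(x)-\mu|\,dx$, which yields the same $\sigma/k$ once $\eta=k\sigma$.
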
 
An immediate implication of Theorem \ref{thm1} is the MSCE property of the prediction sets of PLCP. 
Recall the definitions of $h^\infty, \bq^\infty$ as in \eqref{secondary}. By using the results of Proposition \ref{lem1} and Theorem \ref{thm1}, we obtain the following corollary. 
\begin{corollary}\label{cor1}
Under Assumption~\ref{ass:reg}, the prediction sets $C_\infty=\{y\in\mathcal{Y}|S(x, y)\leq \bq^\infty_{i \sim h^\infty(x)}\}$,  satisfies the following conditional coverage guarantee 
\begin{align*}
    {\rm{MSCE}}( C_\infty)
    \leq 4L \sqrt{\frac{{\rm var}(q_{1-\alpha}(X))}{m-1}},
\end{align*}
where ${\rm var}(\cdot)$ denotes the variance.
\end{corollary}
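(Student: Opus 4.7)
The strategy is to compose Proposition~\ref{lem1} with Theorem~\ref{thm1}. The main twist is that $C_\infty$ is a \emph{randomized} prediction set: the threshold $\bq^\infty_{i\sim h^\infty(x)}$ equals $q_i^\infty$ with probability $h^{\infty,i}(x)$, so the conditional coverage is the mixture
\[
\text{cov}(x) = \sum_{i=1}^m h^{\infty,i}(x)\,\Pr[S\leq q_i^\infty \mid X=x].
\]
Proposition~\ref{lem1} is stated for a deterministic threshold function $g$, so it cannot be invoked off-the-shelf; I would instead extract its pointwise version and recombine it via Jensen.

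Concretely, the proof of Proposition~\ref{lem1} (via the identity $\E_{S\mid X=x}[\ell_\alpha(q,S)-\ell_\alpha(q_{1-\alpha}(x),S)]=\int_{q_{1-\alpha}(x)}^{q}(\Pr[S\leq t\mid X=x]-(1-\alpha))\,dt$ and $L$-Lipschitzness of $\mathcal{D}_{S\mid X}$) delivers the pointwise inequality
\[
\E_{S\mid X=x}[\ell_\alpha(q,S) - \ell_\alpha(q_{1-\alpha}(x), S)] \;\geq\; \frac{(\Pr[S\leq q \mid X=x] - (1-\alpha))^2}{2L}
\]
for every real $q$ and every $x\in\mathcal{X}$. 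I would apply this with $q=q_i^\infty$, average over $i$ against the weights $h^{\infty,i}(x)$, and invoke Jensen's inequality for the convex map $u\mapsto u^2$ in the form $\sum_i h^{\infty,i}(x)\,u_i^2 \geq \bigl(\sum_i h^{\infty,i}(x)\,u_i\bigr)^2$, taking $u_i := \Pr[S\leq q_i^\infty\mid X=x] - (1-\alpha)$. The inner sum then collapses to $\text{cov}(x)-(1-\alpha)$, yielding
\[
\sum_{i=1}^m h^{\infty,i}(x)\,\E_{S\mid X=x}\bigl[\ell_\alpha(q_i^\infty,S) - \ell_\alpha(q_{1-\alpha}(x),S)\bigr] \;\geq\; \frac{(\text{cov}(x)-(1-\alpha))^2}{2L}.
\]

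Taking expectation over $X$ turns the right-hand side into $\text{MSCE}(C_\infty)/(2L)$, while the left-hand side is precisely the quantity Theorem~\ref{thm1} upper bounds by $2\sqrt{\text{var}(q_{1-\alpha}(X))/(m-1)}$, since $(h^\infty,\bq^\infty)$ are the minimizers in~\eqref{secondary}. Rearranging gives the claimed bound $\text{MSCE}(C_\infty)\leq 4L\sqrt{\text{var}(q_{1-\alpha}(X))/(m-1)}$. The only delicate step is the randomization: Jensen must be applied in the direction $\sum_i h^i u_i^2 \geq (\sum_i h^i u_i)^2$ so that the per-index squared coverage errors collapse onto the squared deviation of the \emph{mixture} coverage from $1-\alpha$, which is what actually defines MSCE in~\eqref{MSCE}. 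With that observation in hand, the corollary is a clean composition of Proposition~\ref{lem1}'s local bound and Theorem~\ref{thm1}'s pinball-loss approximation.
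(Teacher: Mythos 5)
Your proof is correct, and it actually fills in a detail the paper glosses over: the paper simply states that Corollary~\ref{cor1} follows from Proposition~\ref{lem1} and Theorem~\ref{thm1}, without addressing the fact that $C_\infty$ is a randomized set (so Proposition~\ref{lem1}, stated for a deterministic threshold $g$, cannot be invoked verbatim). Your resolution --- extracting the pointwise bound $\E_{S\mid X=x}[\ell_\alpha(q,S)-\ell_\alpha(q_{1-\alpha}(x),S)]\geq \frac{1}{2L}\bigl(\Pr[S\leq q\mid X=x]-(1-\alpha)\bigr)^2$ from the proof of Proposition~\ref{lem1}, averaging over $i\sim h^\infty(x)$, and collapsing via Jensen in the direction $\sum_i h^i u_i^2\geq(\sum_i h^i u_i)^2$ --- is exactly right, and the resulting left-hand side is the same optimal-value gap that Theorem~\ref{thm1} controls. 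One small remark: since the inner minimization over $\bp\in\Delta_m$ in~\eqref{secondary} is linear, $h^\infty$ can always be chosen as a hard (one-hot) assignment, in which case $C_\infty$ is deterministic and Proposition~\ref{lem1} applies directly with $g(x)=q^\infty_{i^*(x)}$; that is probably the shortcut the authors had in mind. Your Jensen argument is the more robust version, since it covers soft $h^\infty$ as well, and is the one I would keep.
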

Corollary \ref{cor1} indicates the effect of the number of groups, $m$, on the MSCE in the infinite-sample (population) regime. For further discussion on this result, please see Remarks \ref{remark1}, \ref{remark2} in the appendix.
\subsection{Finite data}
Next, we turn into the finite-size setting and analyze the performance of prediction sets provided by PLCP. Naturally, we should expect that the complexity of the function class $\mathcal{H}$ used by PLCP to play a role.   In this context, the following statement  captures the complexity of a function class using the well-known notion of covering number.
\begin{definition}
    ($\varepsilon$-net). Let $(T, d)$ be a metric space. Consider a subset $K \subset T$ and let $\varepsilon>0$. A subset $\mathcal{N} \subseteq K$ is called an $\varepsilon$-net of $K$ if every point in $K$ is within distance $\varepsilon$ of some point of $\mathcal{N}$, i.e. $\forall x \in K\quad  \exists y \in \mathcal{N}: d\left(x, y\right) \leq \varepsilon$.
\end{definition}
\begin{definition}
    (Covering numbers). The smallest possible cardinality of an $\varepsilon$-net of $K$ is called the covering number of $K$ and is denoted $\mathcal{N}(K, d, \varepsilon)$. 
\end{definition}
\begin{propo}
The following function $d:\Delta^\mathcal{X}_m \times\Delta^\mathcal{X}_m \rightarrow \R^+\cup\{0\}$ is a metric over $\Delta^\mathcal{X}_m$:
\begin{align*}
    d(h_1, h_2) = \sup_{x\in \mathcal{X}} \sum_{i=1}^{m} \left|h_1^i(x)-h_2^i(x)\right|,
\end{align*}
\end{propo}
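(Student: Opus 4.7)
The plan is to verify the four standard metric axioms for $d$ on $\Delta^{\mathcal{X}}_m$: finiteness/non-negativity, symmetry, identity of indiscernibles, and the triangle inequality. The key technical observation that makes the statement well-posed is that since $h_1(x), h_2(x) \in \Delta_m$ for every $x$, we have
\begin{equation*}
\sum_{i=1}^m |h_1^i(x) - h_2^i(x)| \leq \sum_{i=1}^m h_1^i(x) + \sum_{i=1}^m h_2^i(x) = 2,
\end{equation*}
so the supremum defining $d(h_1, h_2)$ is always bounded, hence finite. Non-negativity and symmetry are then immediate from the corresponding properties of $|\cdot|$.

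For identity of indiscernibles, the reverse direction ($h_1 = h_2 \Rightarrow d(h_1, h_2) = 0$) is trivial. For the forward direction, if $d(h_1, h_2) = 0$ then for every $x \in \mathcal{X}$ the non-negative sum $\sum_i |h_1^i(x) - h_2^i(x)|$ vanishes, which forces $h_1^i(x) = h_2^i(x)$ for all $i$ and $x$, i.e., $h_1 = h_2$ as functions from $\mathcal{X}$ to $\Delta_m$.

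The triangle inequality is the only step that requires mild care. First I would fix $h_1, h_2, h_3 \in \Delta^{\mathcal{X}}_m$ and apply the scalar triangle inequality pointwise and termwise:
\begin{equation*}
|h_1^i(x) - h_3^i(x)| \leq |h_1^i(x) - h_2^i(x)| + |h_2^i(x) - h_3^i(x)|.
\end{equation*}
Summing over $i$ from $1$ to $m$ at a fixed $x$ gives
\begin{equation*}
\sum_{i=1}^m |h_1^i(x) - h_3^i(x)| \leq \sum_{i=1}^m |h_1^i(x) - h_2^i(x)| + \sum_{i=1}^m |h_2^i(x) - h_3^i(x)|.
\end{equation*}
Then bounding each term on the right by its own supremum over $\mathcal{X}$ and taking the supremum of the left-hand side yields $d(h_1, h_3) \leq d(h_1, h_2) + d(h_2, h_3)$, using the standard fact $\sup_x (f(x)+g(x)) \leq \sup_x f(x) + \sup_x g(x)$.

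None of the steps is really an obstacle; the only subtlety worth being explicit about is checking that $d$ takes finite values on $\Delta^{\mathcal{X}}_m$ (not just on bounded subsets), which as noted above is guaranteed by the simplex constraint. With that in hand, the four axioms follow directly from pointwise/termwise manipulations of absolute values.
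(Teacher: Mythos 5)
Your proof is correct: verifying non-negativity, symmetry, identity of indiscernibles, and the triangle inequality (via the pointwise scalar triangle inequality and the subadditivity of $\sup$) is exactly the standard argument, and your observation that the simplex constraint bounds the sum by $2$ correctly establishes finiteness of the supremum. The paper itself states this proposition without a proof, evidently regarding it as the routine check that $d$ is the sup-of-$\ell_1$ distance; your write-up supplies that routine check correctly and completely.
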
  
We will see that $\mathcal{N}(\mathcal{H}, d, \frac{1}{n})$ will play an important role in our finite sample analysis. Assuming that a class of functions has ``bounded complexity'', we would expect a trade-off between $m$, the number of groups, and $n$, the number of i.i.d. samples. On the one hand, corollary~\ref{cor2} suggests that increasing $m$ should benefit the conditional coverage of PLCP; and on the other hand, increasing $m$ in the finite sample regime would lead to a smaller number of samples per each group, that can hurt the coverage property of PLCP prediction sets. The next Theorem  characterizes this trade-off precisely. Before presenting the theorem we need to  define the approximation gap $\lambda_\mathcal{H}$ of a class $\mathcal{H}$ as
\begin{align}\label{gap}
    \lambda_\mathcal{H} = & \underset{h \in \mathcal{H}, \bq \in \mathbb{R}^m}{\text{min}} \underset{(X, S) \sim \mathcal{D}}{\mathbb{E}}\left[\sum_{i=1}^n h^i(X)\ell_\alpha(q_i, S)\right]\\ \nonumber
    &- \underset{h \in \Delta^\mathcal{X}_m, \bq \in \mathbb{R}^m}{\text{min}} \underset{(X, S) \sim \mathcal{D}}{\mathbb{E}}\left[\sum_{i=1}^m h^i(X)\ell_\alpha(q_i, S)\right].
\end{align}

In words, the approximation gap $\lambda_\mathcal{H}$ is the error we suffer due to restricting to the function class $\mathcal{H}$. Essentially, when $\lambda_\mathcal{H}=0$ (a.k.a. the minimizer is achieved by the class $\mathcal{H}$) then this becomes an analogous to the ``realizable case`` terminology that exists in the learning theory literature.

\begin{theorem}\label{thm2}
Under assumptions \ref{ass:iid} and \ref{ass:bounded} we have, with probability $1-\delta$,
\begin{align*}
    &\left|\underset{(X, S) \sim \mathcal{D}}{\mathbb{E}}\left[\sum_{i=1}^m {h^*}^i(X)\ell_\alpha({q^*_i}, S)\right] - 
    \mathbb{E}\left[l_{\alpha}(q_{1-\alpha}(X), S)\right]\right| \\
    &\leq 10 \sqrt{\frac{\ln\left( \frac{2}{\delta}\right) + \ln\left( \mathcal{N}(\mathcal{V}, ||.||_\infty, \epsilon_2)\right) + \ln \left(\mathcal{N}(\mathcal{H}, d, \frac{1}{n})\right)}{n}}\\
    &+ 2\sqrt{\frac{\text{var}(q_{1-\alpha}(X))}{m}} +\lambda_\mathcal{H} ,
\end{align*}
where $h^*, \bq^*$ are defined in algorithm \ref{PLCP}.
\end{theorem}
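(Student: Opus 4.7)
The plan is to perform a standard ERM-style analysis that decomposes the excess risk of the empirical minimizer $(h^*, \bq^*)$ relative to the oracle quantile function $q_{1-\alpha}$ into three contributions: a uniform generalization gap, the class approximation gap $\lambda_\mathcal{H}$, and the residual population error from using only $m$ groups, which has already been bounded by Theorem~\ref{thm1}. Write $R(h, \bq) = \mathbb{E}[\sum_{i=1}^m h^i(X)\ell_\alpha(q_i, S)]$ and let $\hat{R}$ denote its empirical counterpart; let $(h_\mathcal{H}, \bq_\mathcal{H})$ be a minimizer of $R$ over $\mathcal{H} \times \mathbb{R}^m$. Since $R(q_{1-\alpha}) := \mathbb{E}[\ell_\alpha(q_{1-\alpha}(X), S)] \leq R(h, \bq)$ for every $(h, \bq)$ by \eqref{quantilemin}, the absolute value is automatic and we bound
\[
R(h^*, \bq^*) - R(q_{1-\alpha}) \leq \underbrace{[R(h^*, \bq^*) - \hat{R}(h^*, \bq^*)]}_{A_1} + \underbrace{[\hat{R}(h_\mathcal{H}, \bq_\mathcal{H}) - R(h_\mathcal{H}, \bq_\mathcal{H})]}_{A_2} + \lambda_\mathcal{H} + 2\sqrt{\tfrac{{\rm var}(q_{1-\alpha}(X))}{m-1}},
\]
where the step $\hat{R}(h^*, \bq^*) - \hat{R}(h_\mathcal{H}, \bq_\mathcal{H}) \leq 0$ uses optimality of $(h^*, \bq^*)$, $\lambda_\mathcal{H}$ comes from definition \eqref{gap}, and the last term is Theorem~\ref{thm1}.

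The remaining task is to control $|A_1| + |A_2| \leq 2\Delta$ where $\Delta := \sup_{(h,\bq)\in \mathcal{H}\times\mathcal{V}} |R(h,\bq) - \hat{R}(h,\bq)|$ and $\mathcal{V}\subseteq\mathbb{R}^m$ is the admissible range of $\bq$, which by Assumption~\ref{ass:bounded} can be taken to be $[0,1]^m$. The integrand $\phi(x, s; h, \bq) = \sum_i h^i(x) \ell_\alpha(q_i, s)$ lies in $[0,1]$ because $h(x) \in \Delta_m$ and $\ell_\alpha(q, s) \in [0,1]$ for $q, s \in [0,1]$. For a single $(h_0, \bq_0)$ Hoeffding's inequality (under Assumption~\ref{ass:iid}) gives $|R(h_0,\bq_0) - \hat{R}(h_0,\bq_0)| \leq \sqrt{\log(2/\delta')/(2n)}$ with probability $1-\delta'$. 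I would then build the product of an $\epsilon_2$-net $\mathcal{N}_\mathcal{V}$ of $\mathcal{V}$ in $\|\cdot\|_\infty$ and a $(1/n)$-net $\mathcal{N}_\mathcal{H}$ of $\mathcal{H}$ in the metric $d$, and apply a union bound with $\delta' = \delta/(|\mathcal{N}_\mathcal{H}|\,|\mathcal{N}_\mathcal{V}|)$.

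Passing from the net to the supremum requires two short Lipschitz discretization estimates, to be applied to both $R$ and $\hat{R}$. For any $(h, \bq)$ pick $(h_0, \bq_0)$ in the net with $d(h, h_0)\leq 1/n$ and $\|\bq - \bq_0\|_\infty \leq \epsilon_2$. Since the pinball loss is $1$-Lipschitz in its first argument, $|\phi(x,s; h, \bq) - \phi(x,s; h, \bq_0)| \leq \sum_i h^i(x) |\ell_\alpha(q_i,s) - \ell_\alpha(q_{0,i},s)| \leq \epsilon_2$; and because $\ell_\alpha \leq 1$, $|\phi(x,s; h, \bq_0) - \phi(x,s; h_0, \bq_0)| \leq \sum_i |h^i(x) - h_0^i(x)| \leq d(h, h_0) \leq 1/n$, the bounds being uniform in $(x,s)$. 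Choosing $\epsilon_2 = 1/n$ and absorbing constants gives
\[
\Delta \leq C\sqrt{\tfrac{\log(2/\delta) + \log \mathcal{N}(\mathcal{V}, \|\cdot\|_\infty, \epsilon_2) + \log \mathcal{N}(\mathcal{H}, d, 1/n)}{n}} + \tfrac{C'}{n},
\]
which combines with the decomposition above to produce the advertised bound.

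The main obstacle is the second paragraph: ensuring the $d$-covering of $\mathcal{H}$ really produces a uniform-in-$(x,s)$ discretization of $\phi$, which is exactly why $d$ is taken as the supremum over $x\in\mathcal{X}$ of the $\ell_1$-distance on the simplex, and why the admissible $\bq$-range must be bounded (so that $\ell_\alpha$ is bounded and Lipschitz with a uniform constant). Once these two compatibility conditions are in place, the rest is standard Hoeffding plus union bound plus the triangle inequality that folds $A_1$, $A_2$, the ERM step, $\lambda_\mathcal{H}$, and Theorem~\ref{thm1} into the stated bound; the constant $10$ simply absorbs $\sqrt{1/2}$, the factor of $2$ from $|A_1|+|A_2|\leq 2\Delta$, and the $O(1/n)$ discretization slack.
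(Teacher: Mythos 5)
Your proof is correct and follows essentially the same route as the paper: pointwise concentration (Hoeffding/Chernoff), a union bound over a product $\epsilon$-net of $\mathcal{H}\times\mathcal{V}$ with $\epsilon_1=\epsilon_2=1/n$, a Lipschitz discretization argument to pass from the net to all of $\mathcal{H}\times\mathcal{V}$, then stitching the resulting uniform generalization bound together with the approximation gap $\lambda_{\mathcal{H}}$ and Theorem~\ref{thm1}. The one place you are slightly more careful than the paper is in spelling out the ERM step $\hat{R}(h^*,\bq^*)\leq\hat{R}(h_{\mathcal{H}},\bq_{\mathcal{H}})$ inside the $A_1+A_2$ decomposition, which the paper leaves implicit between its uniform bound on $|Z_{h,\bq}|$ and equation \eqref{fin}; conversely, the paper explicitly verifies via the first-order condition that the optimal $\bq$'s lie in $[0,1]^m$, a point you assert from Assumption~\ref{ass:bounded} but do not derive. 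These are minor expository differences, not mathematical ones.
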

Applying Proposition \ref{lem1} to Theorem \ref{thm2} leads to the following corollary, which is our main theoretical result on the finite-size behavior of the predictions sets obtained by~PLCP.
\begin{corollary}\label{cor2}
    Under assumptions \ref{ass:iid}, \ref{ass:reg}, and \ref{ass:bounded} we have, with probability $1-\delta$,
    \begin{align*}
        {\rm MSCE}(C^*) \leq & 
        20L \sqrt{\frac{\ln\left( \frac{2}{\delta}\right) + m\ln\left(n\right) + \ln \left(\mathcal{N}(\mathcal{H}, d, \frac{1}{n})\right)}{n}} \\
        &+ 4L\sqrt{\frac{\text{var}(q_{1-\alpha}(X))}{m}} + 2L\lambda_\mathcal{H},
    \end{align*}
where $C^*$ is the prediction sets constructed by PLCP.
\end{corollary}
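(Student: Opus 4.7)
The plan is to derive Corollary \ref{cor2} as a direct consequence of Proposition \ref{lem1} (which bounds MSCE by the excess pinball loss of a deterministic threshold) composed with Theorem \ref{thm2} (which controls the population excess pinball loss of the PLCP minimizer $h^*, \bq^*$). The only slightly delicate piece is that Proposition \ref{lem1} is stated for a deterministic threshold function $g$, whereas $C^*(x)$ uses the randomized threshold $\bq^*_{i \sim h^*(x)}$; I will bridge these by a short Jensen step that converts the randomization into a convex combination of pointwise excess-loss bounds.

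To carry this out, I first unfold the coverage of $C^*$ at each $x$:
\begin{align*}
    \text{cov}^*(x) \;=\; \Pr[S\le \bq^*_{i\sim h^*(x)} \mid X=x] \;=\; \sum_{i=1}^m {h^*}^i(x)\, F_{S \mid x}(q_i^*),
\end{align*}
where $F_{S\mid x}$ is the conditional CDF of $S$ given $X=x$. Since ${h^*}(x)\in\Delta_m$ and $t\mapsto t^2$ is convex, Jensen's inequality gives $(\text{cov}^*(x) - (1-\alpha))^2 \le \sum_{i=1}^m {h^*}^i(x)\,(F_{S\mid x}(q_i^*) - (1-\alpha))^2$. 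The proof of Proposition \ref{lem1} in fact proceeds through a pointwise estimate of the form $(F_{S\mid x}(q) - (1-\alpha))^2 \le 2L\,(\E_{S\mid x}\ell_\alpha(q,S) - \E_{S\mid x}\ell_\alpha(q_{1-\alpha}(x),S))$, valid for any deterministic $q$ under Assumption \ref{ass:reg}. Applying this pointwise bound to each $q_i^*$, summing against ${h^*}^i(x)$, using $\sum_i {h^*}^i(x)=1$ on the right, and taking $\E_X$ yields
\begin{align*}
    {\rm MSCE}(C^*) \;\le\; 2L\left(\E\!\left[\sum_{i=1}^m {h^*}^i(X)\,\ell_\alpha(q_i^*,S)\right] - \E[\ell_\alpha(q_{1-\alpha}(X),S)]\right).
\end{align*}

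From here the corollary follows by plugging in Theorem \ref{thm2}, which controls exactly the parenthesized quantity (in absolute value) with probability $1-\delta$; multiplying the three-term bound by $2L$ yields precisely the stated coefficients $20L$, $4L$, and $2L\lambda_\mathcal{H}$. The remaining cosmetic step is to reconcile the covering number $\ln \mathcal{N}(\mathcal{V},\|\cdot\|_\infty, \varepsilon_2)$ appearing in Theorem \ref{thm2} with the $m\ln n$ term in the corollary: by Assumption \ref{ass:bounded} the candidate quantile vectors lie in $[0,1]^m$, and a $1/n$-net of the unit cube in the $\ell_\infty$ metric has cardinality at most $n^m$, so choosing $\varepsilon_2 = 1/n$ gives $\ln \mathcal{N} \le m\ln n$.

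The only real obstacle is the Jensen reduction from the randomized threshold $\bq^*_{i \sim h^*(x)}$ back to a convex combination of the pointwise Proposition \ref{lem1} bounds; once that is in hand, the remainder is bookkeeping, and the covering-number estimate for the cube is entirely standard.
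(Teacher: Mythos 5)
Your proof is correct and follows the same route the paper sketches, namely composing Proposition~\ref{lem1} with Theorem~\ref{thm2} and then instantiating the covering number of $\mathcal{V}=[0,1]^m$ at scale $1/n$. The Jensen step you supply — passing from the randomized threshold $\bq^*_{i\sim h^*(x)}$ to a convex combination of the pointwise bound $(F_{S\mid x}(q)-(1-\alpha))^2 \le 2L\,(\gamma(x,q)-\gamma(x,q_{1-\alpha}(x)))$ extracted from the proof of Proposition~\ref{lem1} — is exactly the detail needed to make this composition rigorous when $h^*\in\mathcal{H}$ is not a vertex selector, and the paper leaves it implicit.
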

For further discussion, see Remark \ref{remark3} in the appendix.

\subsection{Fallback Coverage Guarantees}
The MSCE bounds provided in Corollaries \ref{cor1}, \ref{cor2}, can be simply translated to fallback coverage guarantees. 

Given prediction sets $C(x), x \in \mathcal{X}$, assume that 
$$\mathrm{MSCE}:= E\left[(\operatorname{Cov}(X)-(1-\alpha))^2\right] \leq p,$$
for some $p \geq 0$.
Using Jensen's inequality, we have
$$
1-\alpha-\sqrt{p} \leq E[\operatorname{Cov}(X)]=\Pr(Y \in C(X)) \leq 1-\alpha+\sqrt{p}
$$
This means any bound on MSCE gives a bound on the \emph{marginal} coverage of the prediction sets.

Furthermore, for \emph{any set} $A \subseteq \mathcal{X}$ such that $P[A] \geq \delta$, we have,
\begin{align*}
&1-\alpha-\sqrt{\frac{p}{\delta}} \leq E[\operatorname{Cov}(X) \mid X \in A]\\
&=\Pr(Y \in C(X) \mid X \in A) \leq 1-\alpha+\sqrt{\frac{p}{\delta}}
\end{align*}
This indicates that a bound on MSCE can be translated to \emph{conditional} coverage guarantees. 

We can now use the results of Corollaries \ref{cor1}, \ref{cor2} to obtain specific coverage guarantees for PCLP. These fallback coverage guarantees are provided in Corollary \ref{Fallback} in the appendix.

\section{Experimental Results}\label{exp}
We have conducted extensive numerical experiments to evaluate the performance of PLCP in terms of two key metrics: the coverage probability and the average length of the prediction intervals. These metrics are evaluated with respect to specific groups within the test data. For a given group of interest \(G\), conditional coverage and length are examined as quantities: $\Pr[Y \in C(X) \mid X \in G] $ and $\E[\text{length}(C(X)) \mid X \in G]$, where \(C(X)\) denotes the prediction set for a covariate point \(X\). The group \(G\) may represent a subpopulation within the data or a set of covariates that have undergone shifts relative to the calibration set. These shifts include features that are both Out of Distribution (OOD) and In Distribution (ID) w.r.t. the training set utilized for the predictor. We compare PLCP against three baselines: (i) the Split Conformal method \cite{Papadopoulos2002InductiveCM, Lei2016DistributionFreePI}; (ii) the BatchGCP method \cite{jung2023batch}; and (iii) the Conditional Calibration method \cite{gibbs2023conformal}. In all experiments, we set the miscoverage rate, $\alpha$, to 0.1. For a discussion on how to tune $m$, the number of regions in PLCP, see Remark \ref{pickm} in the~Appendix.

We have provided further experimental results in Appendix~\ref{add_exp} by comparing PLCP with several other baselines such as the method of Conformalized Quantile Regression (CQR) developed in \cite{romano2019conformalized} as well as the LocalCP method \cite{hore2023conformal}. These baselines are selected due to their emphasis on calibrated prediction sets that cater to conditional coverage. In Appendix \ref{add_exp}, we also look at two metrics recommended by \cite{feldman2021improving}\cite{feldman2021improving}, beyond coverage and set size plots. 

\subsection{Comparison with Split Conformal}\label{HS}
\begin{figure}[ht]
\centering
\includegraphics[width=0.49\columnwidth]{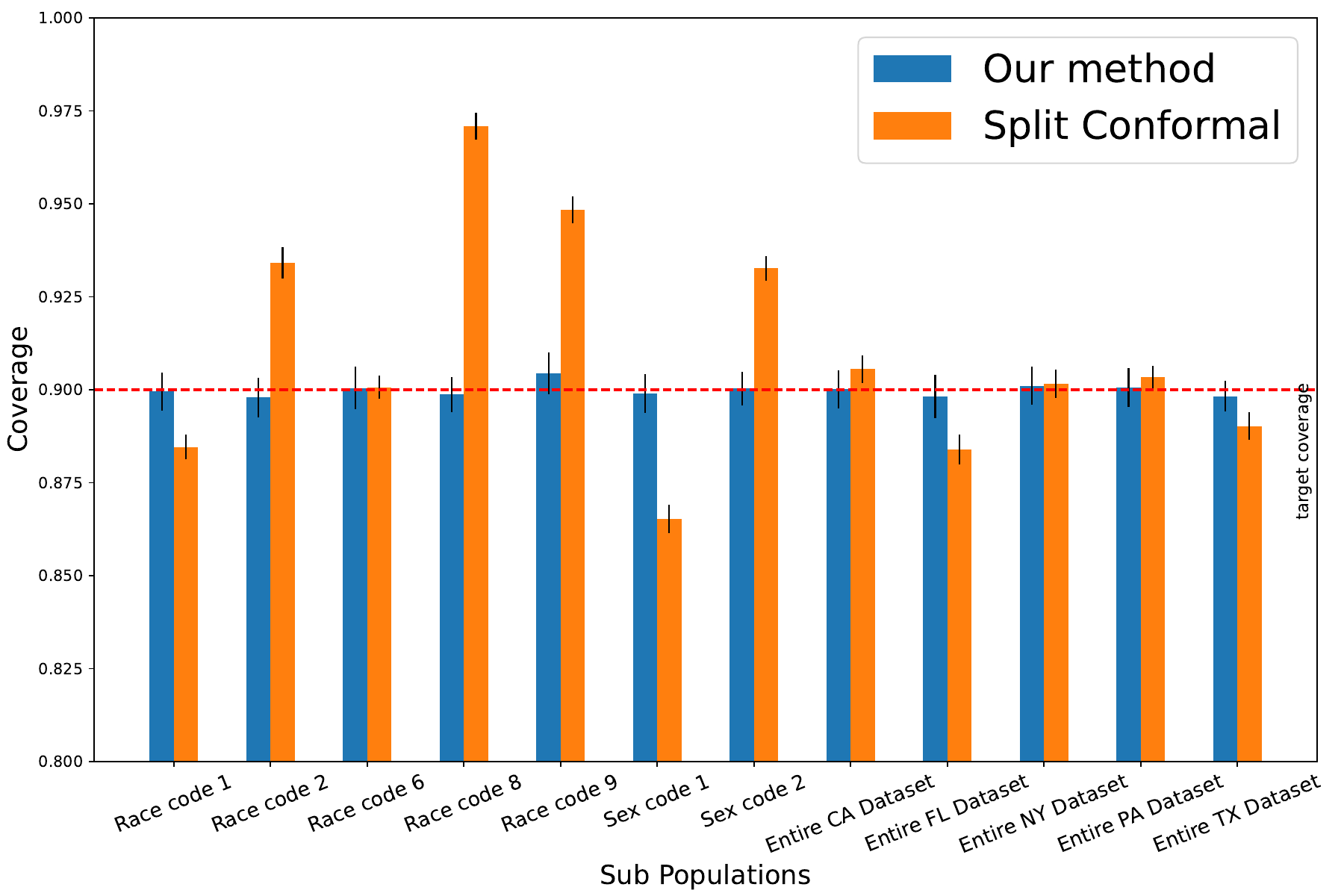}
\hfill
\includegraphics[width=0.49\columnwidth]{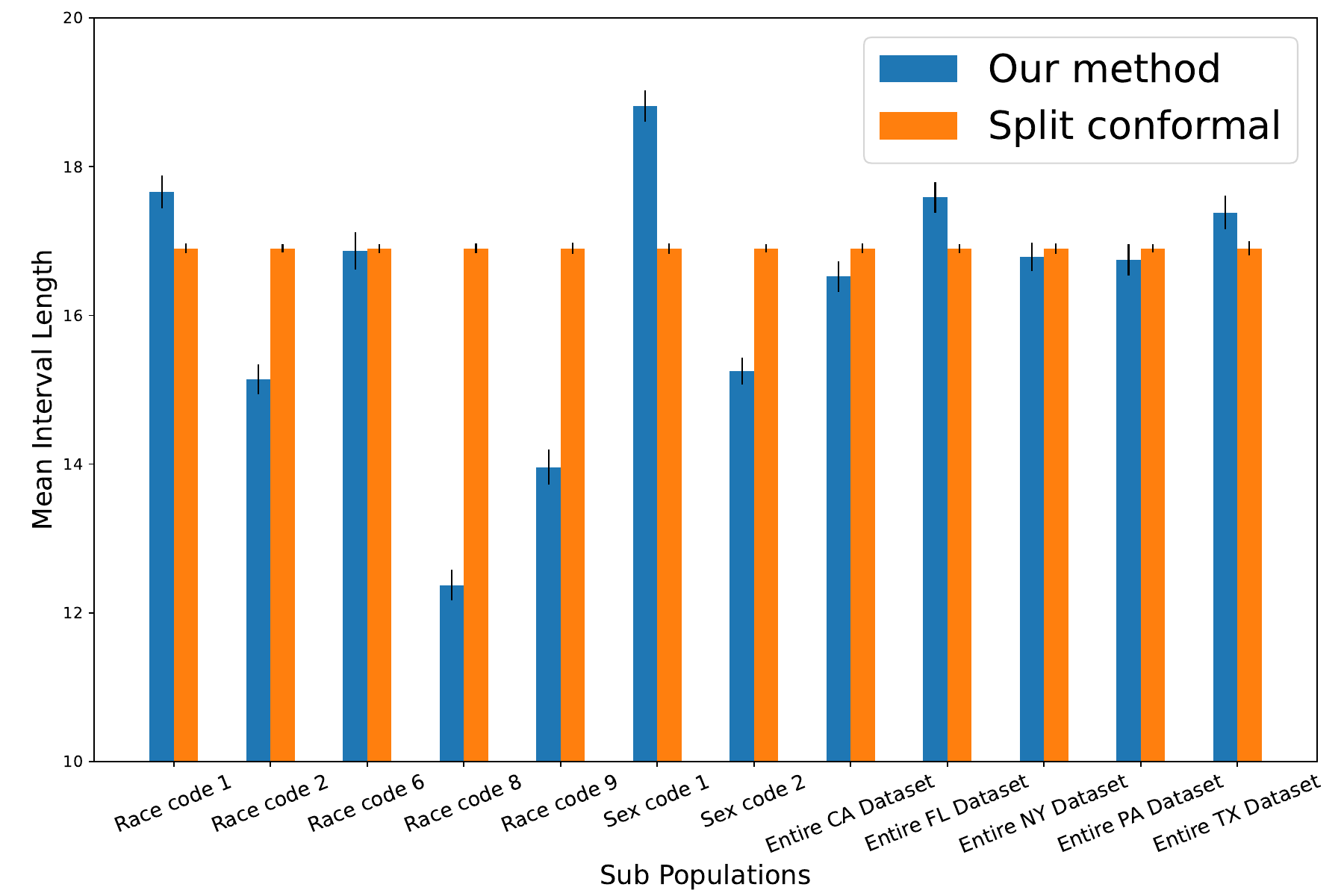}

\includegraphics[width=0.4951\columnwidth]{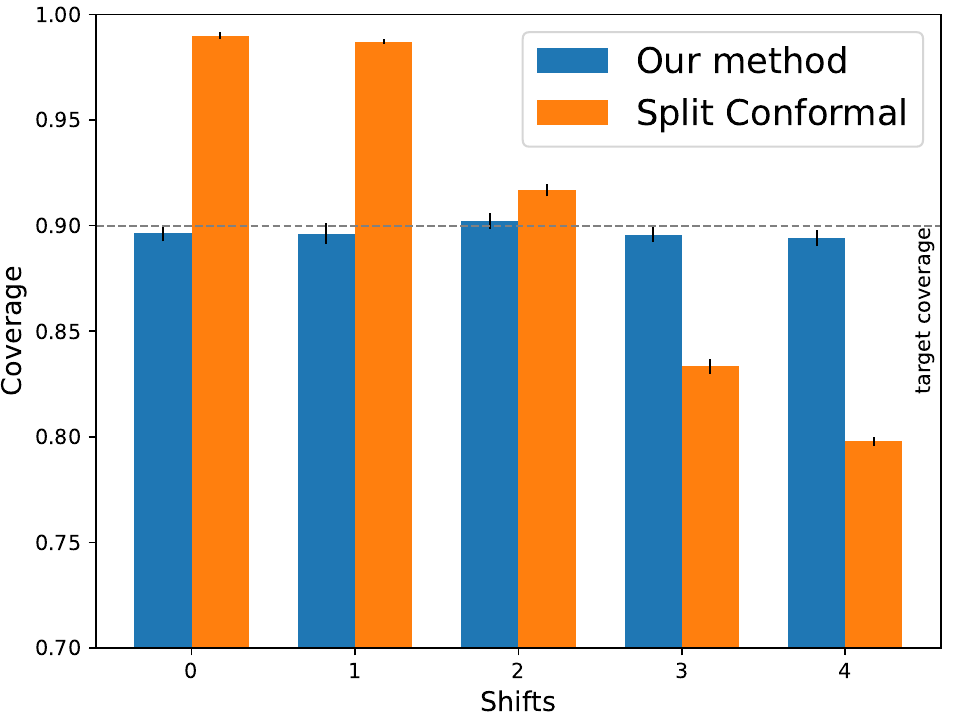}
\hfill
\includegraphics[width=0.4951\columnwidth]{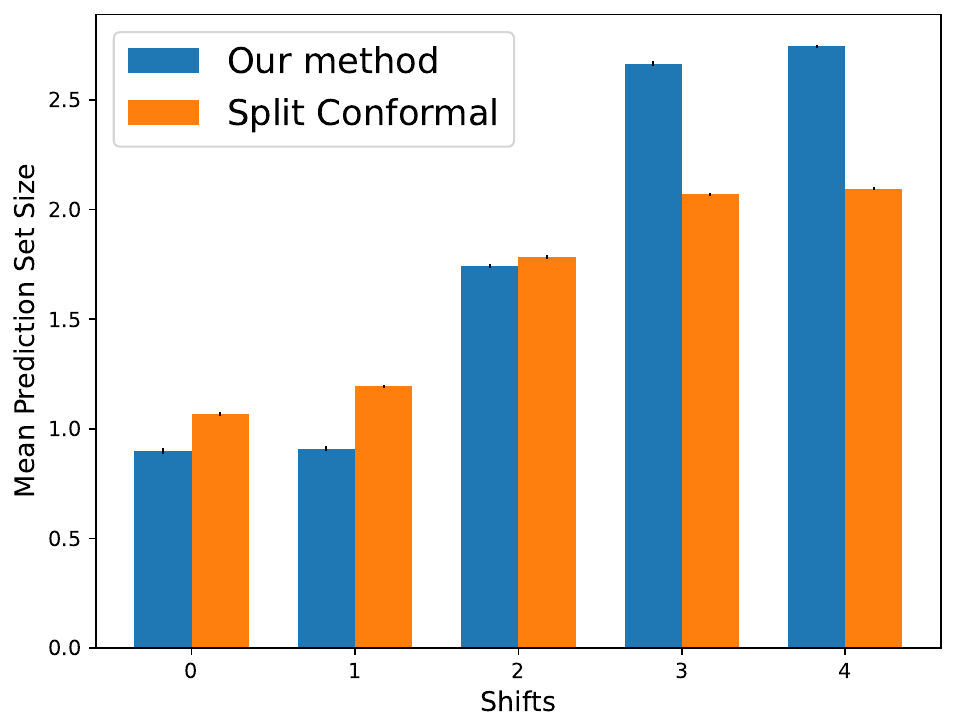}
\caption{Left-hand-side plots show coverage  and right-hand-side plots show mean prediction set size. Row 1: US Census Data; Row 2: MNIST with Gaussian~Blur.}
\label{fig:combined_experiments1}
\end{figure}
Here we compare PLCP with the Split Conformal method. While there are other methods with marginal guarantees, such as Full Conformal Prediction \cite{10.5555/1062391} and Jackknife+ \cite{barber2021predictive}, our choice of Split Conformal is driven by its  implementation efficiency and the fact that its conditional coverage performance is similar to those alternatives. The experiments are conducted in two scenarios depending on calibration data being In Distribution (ID) or Out of Distribution (OOD) relative to the training data.



\textbf{2018 US Census Data (In-Distribution).}
We study the 2018 US Census Data from the Folktables library \cite{ding2021retiring} for income prediction, a dataset rich in demographic details like gender, race, and age. Focusing on the five most populated US states (CA, FL, NY, PA, TX), we aim to model diverse demographic and geographical subpopulations. The objective is to assess PLCP's performance in scenarios where the train, calibration, and test sets are In Distribution (ID).
Data are divided into three segments: 60\% for training, 20\% for calibration, and 20\% for testing. We use a linear regression model as the predictor $f(x)$, with conformity measured by $S(x,y) = |f(x)-y|$. PLCP is implemented using a two-layer ReLU neural network (200 and 100 neurons). Performance evaluation across various racial, gender, and state-wise subpopulations is shown in Figure \ref{fig:combined_experiments1}. PLCP achieves near-perfect coverage across all subpopulations, which showcases PLCP's ability to learn and leverage features that are informative w.r.t. the uncertainty inherent in the predictor $f(x)$.

\textbf{MNIST Data (Out-of-Distribution).}
We divide the MNIST dataset into 35,000 training images and 25,000 for calibration/testing. The calibration/test data is further split into five subgroups each subjected to different Gaussian blur levels. Such added blurriness creates an OOD scenario compared to the training data. These 25,000 blurred images are then randomly divided into a 15,000-image calibration set and a 10,000-image test set. 
The conformity score for each input image $x$ is given as $S(x, y) = 1 - \pi^y(x)$, where $\{\pi^i(x)\}^{10}_{i=1}$ denote the softmax probabilities of class membership output by the trained predictive model. PLCP is implemented with \( m = 8 \) to denote eight distinct groups, employing a Convolutional Neural Network (CNN) architecture with three convolution layers and two feed-forward layers. The results are given in Figure \ref{fig:combined_experiments1}. PLCP demonstrates near-perfect coverage across all blur levels, effectively adjusting to the OOD conditions.

\subsection{Comparison with Methods that Provide Conditional Guarantees with Respect to Predefined Structures}
\textbf{Group-Conditional Methods.}
\begin{figure}[t!]
\centering
\includegraphics[width=0.49\columnwidth]{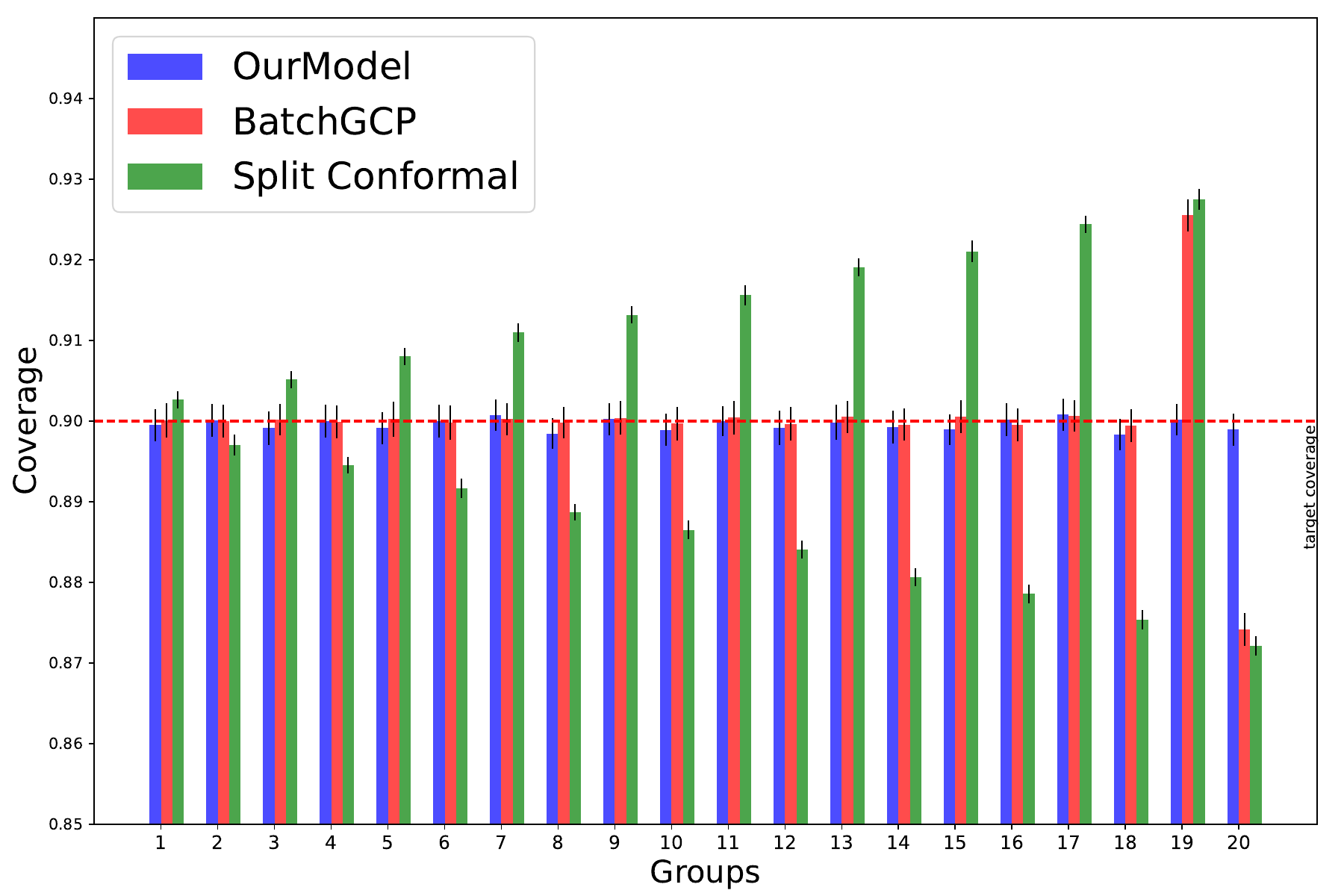}
\hfill
\includegraphics[width=0.49\columnwidth]{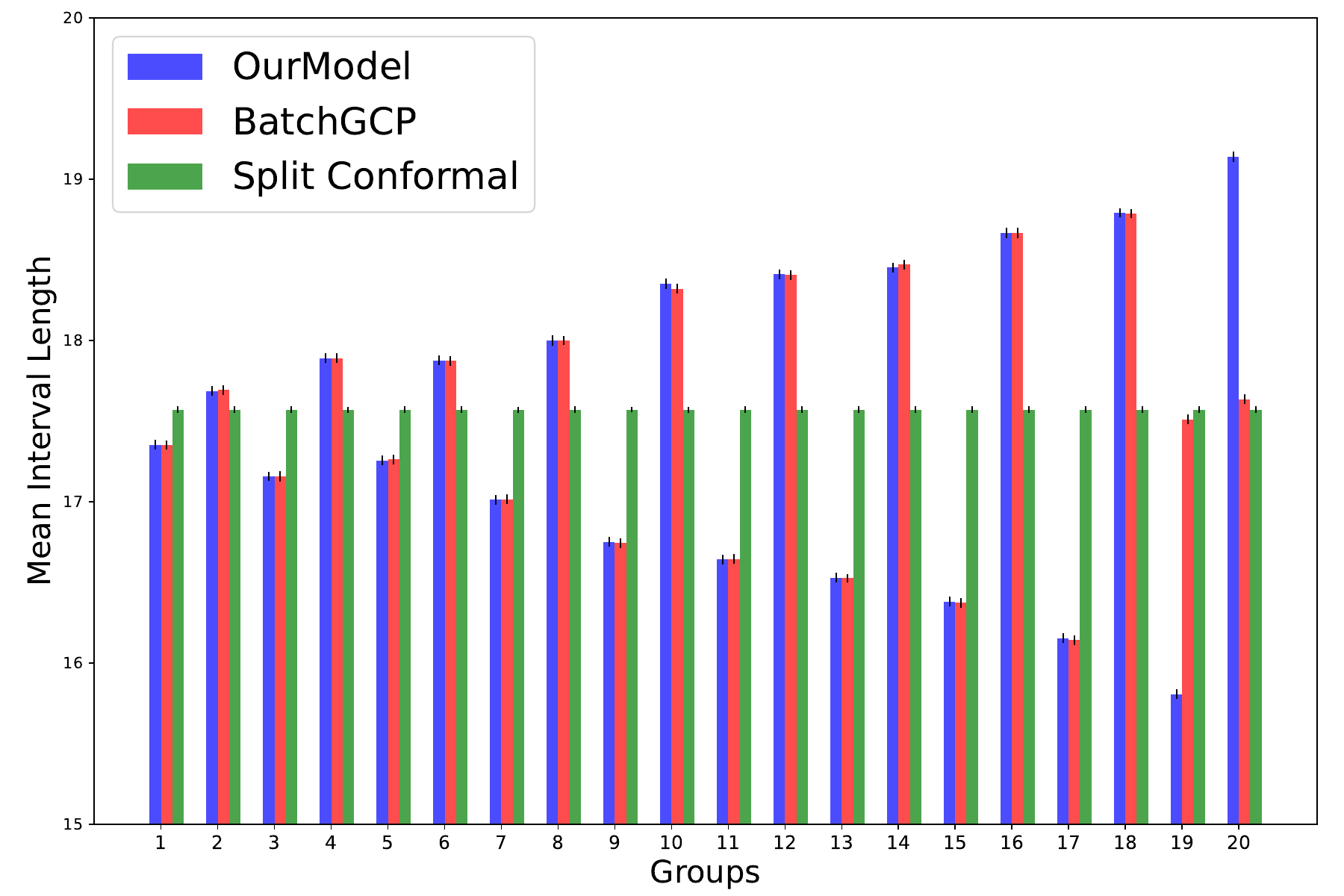}

\includegraphics[width=0.4951\columnwidth]{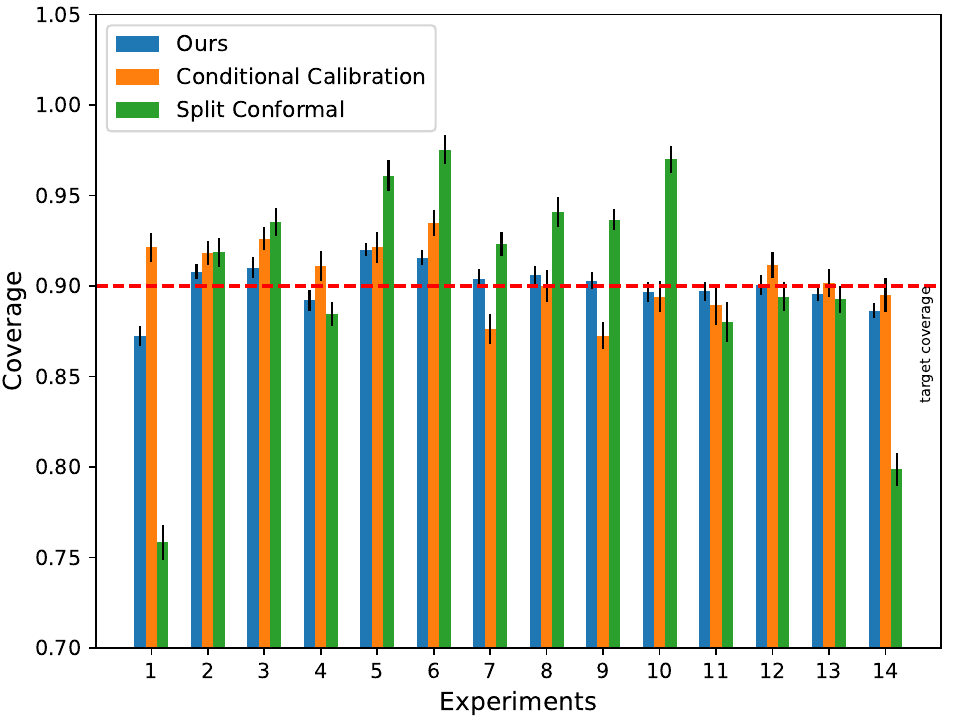}
\hfill
\includegraphics[width=0.4951\columnwidth]{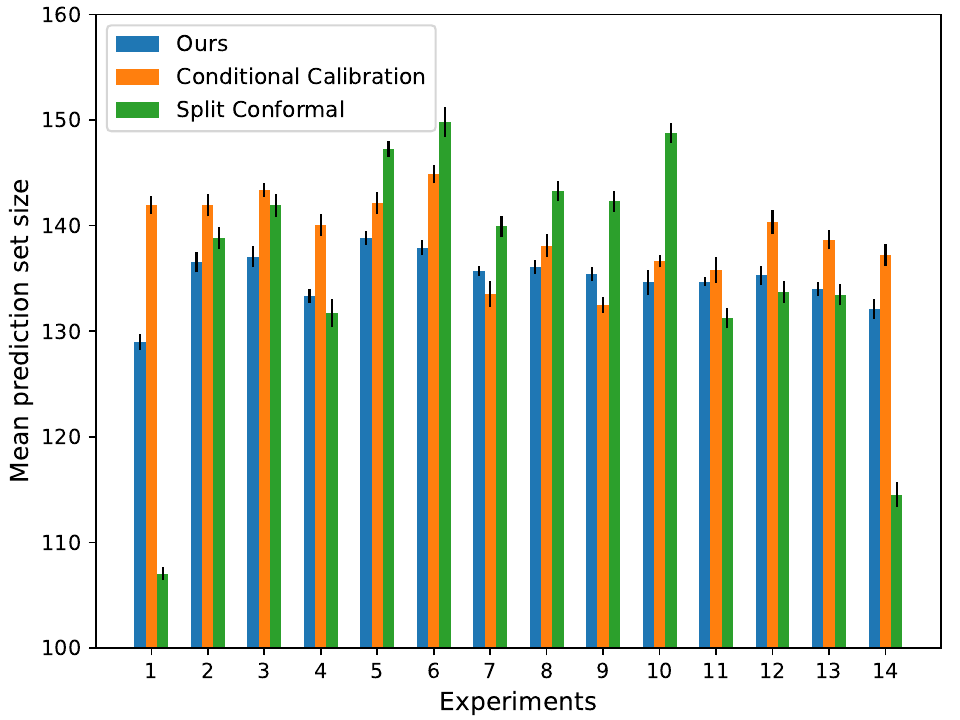}

\caption{Left-hand-side plots show coverage  and right-hand-side plots show mean prediction set size. Row 1: Synthetic Regression Task; Row 2: RxRx1 WILDS Dataset. \vspace{0cm}}
\label{fig:combined_experiments2}
\end{figure}
In this section, we used a synthetic regression task, proposed originally in \cite{jung2023batch}, to compare PLCP with BatchGCP.  The data distribution is given as follows:  The covariate $ X = (X_1, \cdots, X_{100})$  is a vector in \( \mathbb{R}^{100} \). The values in the first ten coordinates of \( X \) are independent uniform binary variables, and the remaining 90 coordinates are i.i.d. Gaussian variables with mean of zero and  variance of \( \sigma_x \). The label \( y \) is then generated as follows:
$y = \langle \theta, X \rangle + \epsilon_X$ where $\epsilon_X \sim \mathcal{N} \left(0, \sigma_x^2 + \sum_{i=1}^{d}X_i \cdot \sigma_i^2 \right)$,
and \( \sigma_i^2 = i \). We generate from this  distribution 150K training samples (to train the regression model to predict the label), 50K calibration data points, and 50K test data points. We evaluate all algorithms over 100 independent trials and report  average performance.

We define 20 overlapping groups based on the first ten binary components of \( X \). Specifically, for each \( i \) in the range 1 to 10, Group \( 2i-1 \) corresponds to \( X_i = 0 \) and Group \( 2i \) to \( X_i = 1 \). BatchGCP is implemented given the knowledge of the first 18 groups (associated with the first nine binary digits), however, no information about the group structures or the data distribution is provided to PLCP. We ran PLCP with \( m=25 \) (25 groups), using a linear classifier (as \(\mathcal{H}\)).

As seen in Figure \ref{fig:combined_experiments2}, the Split Conformal method results in over-coverage for groups with lower variance and under-coverage for those with higher variance. Notably, PLCP achieves coverage performance comparable to BatchGCP for the first 18 groups. However, for groups 19 and 20, whose information was not given to BatchGCP, PLCP exhibits superior coverage. As illustrated in the Mean Interval Length plot of Figure \ref{fig:combined_experiments2}, PLCP adaptively modifies the interval lengths for these two groups, recognizing the importance of \( X_{10} \), a feature learned from the data. This showcases the power of PLCP in learning   features/boundaries in covariate space that are informative about the uncertainty inside the predictions of the pre-trained model.

\textbf{Methods Based on a Family of Covariate Shifts.} 
Our last experiment is on the RxRx1 dataset \cite{taylor2019rxrx1} from the WILDS repository \cite{pmlr-v139-koh21a}, a dataset previously used in \cite{gibbs2023conformal}. This repository includes benchmark datasets particularly designed for evaluating performance under distribution shifts. The RxRx1 dataset comprises images of cells, where the task is to predict one of 1339 genetic treatments applied to these cells, spanning 51 distinct experiments. The inherent execution and environmental variations introduce covariate shifts among images from different experiments. One main challenge in this task is the lack of explicit features indicating experiment origin, which complicates the direct use of methods that rely on predefined groups or covariate shifts for coverage.  We compare PLCP with the Conditional Calibration algorithm \cite{gibbs2023conformal}. The Conditional Calibration approach bifurcates the calibration data: first, it uses $\ell_2$-regularized multinomial linear regression to estimate the likelihood of each image’s experimental origin, then applies these probabilities to define covariate shifts for the second part of the data. In contrast, PLCP inherently adapts to this task by directly learning features important for uncertainty quantification using the entire calibration set. For this experiment, we ran PLCP using a CNN with a single convolution layer, with ReLU activation followed by a linear layer, configured with $m=20$ groups.

For genetic treatment prediction (the predictive model), we employ a ResNet50 architecture, $f(x)$, pre-trained on 37 experiments from the WILDS repository.
The remaining images from the 14 experiments are divided into calibration and test sets uniformly at random. The conformity scores are computed as follows: for each image $x$, let $\{f^i(x)\}^{1339}_{i=1}$ represent the weights assigned by $f(x)$ to the 1339 treatments. Temperature Scaling, followed by a softmax operation, is applied to derive the probability weights $\pi^i(x):=\exp (T f_i(x) ) /(\sum_j \exp (T f_j(x)))$, with $T$ as the temperature parameter. We let $S(x, y):=\sum_{i: \pi_i(x)>\pi_y} \pi_i(x)$. Figure \ref{fig:combined_experiments2} provides a comparative evaluation of PLCP, Conditional Calibration, and Split Conformal. PLCP exhibits a superior performance in terms of set size while matching the coverage performance of Conditional Calibration. This performance, combined with the principled approach of our method, showcases its advantage for applications where identifying the correct uncertainty structures from the covariates is not straightforward.

\section*{Acknowledgements} 
Shayan Kiyani, Hamed Hassani, and George J.\ Pappas are supported by the NSF Institute for CORE Emerging Methods in Data Science (EnCORE).

\section{Impact Statement}
In this paper, we focus on  developing a new algorithmic framework for Conformal Prediction which has immediate applications in areas such as healthcare.  We do not anticipate any negative societal impact.


\bibliography{bibliography}
\bibliographystyle{icml2024}

\newpage
\appendix
\onecolumn
\section{Additional Experiments} \label{add_exp}
In this section we add two more baselines to the experiment setups of the Section \ref{HS}. We already compared PLCP with Split Conformal prediction, a conventional tool to achieve marginal guarantee, and similar to PLCP, it does not need to know any further structures of the data. Here we also consider and report the performance of two other algorithms, which are known to show better conditional coverage behavior than the Split Conformal solution; and they also do not need to have any prior knowledge of the structure of the data. The two methods are  the method of Conformalized Quantile Regression (CQR) developed in \cite{romano2019conformalized} as well as the LocalCP method \cite{hore2023conformal}. In short, CQR combines conformal prediction with classical quantile regression and makes the interval length more adaptable across the input space. LocalCP uses a weighted version of Split Conformal prediction, where the weights come from a local similarity  kernel (e.g. a gaussian). These methods are selected due to their emphasis on calibrated prediction sets that cater to conditional coverage. Furthermore, to make a more refined comparison, in addition to coverage and length plots, we also look at two metrics recommended by \cite{feldman2021improving}; Namely, the correlation between the size of prediction intervals and the indicators of coverage, assessed using both the Hilbert-Schmidt Independence Criterion (HSIC) and the Pearson’s correlation coefficient. In the tables provided (table \ref{table_mnist}, \ref{table_census}), the numbers represent percentages of improvement in comparison to the Split Conformal solution. This presentation mirrors the approach taken in the paper \cite{feldman2021improving} where correlations are similarly reported. In the same paper, the authors argue that for a prediction set to adhere to the full conditional coverage criterion (see \eqref{full}), its interval length and its coverage indicator must exhibit independence. Therefore, the improvement percentages detailed in the subsequent tables can be understood as the percentage reduction in the relevant correlation metric when measured against the Split Conformal solution. Tables \ref{table_mnist}, \ref{table_census} and the plots in Figure~\ref{fig:combined_experiments1_reb} show the superior performance of PLCP compared to the baselines.

\begin{table}[h!]
\centering
\caption{Results for the 2018 US Census}
\begin{tabular}{lccc}
\toprule
& PLCP & CQR & LocalCP \\
\midrule
HSIC & 56.07 & 51.89 & 32.12 \\
Pearson & 64.06 & 59.21 & 37.96 \\
\bottomrule
\end{tabular}\label{table_mnist}
\end{table}

\begin{table}[h!]
\centering
\caption{Results for the MNIST dataset}
\begin{tabular}{lccc}
\toprule
& PLCP & CQR & LocalCP \\
\midrule
HSIC & 73.36 & 58.01 & 40.88 \\
Pearson & 86.08 & 75.19 & 47.51 \\
\bottomrule
\end{tabular}\label{table_census}
\end{table}

\section{Remarks}

\begin{remark}\label{remark1}
  Theorems \ref{thm1} and Corollary \ref{cor1} can be  interpreted through the lens of quantization, a fundamental concept in information theory and signal processing. By constraining the number of groups to $m$, the quantity $\bq^\infty_{i\sim h^\infty(x)}$ takes only $m$ distinct values. This scenario can be conceptualized as an $m$-level  quantization ($\log(m)$ bits) of the signal $q_{1-\alpha}(X)$. With this perspective, Theorem \ref{thm1} provides an upper bound on the quantization error for the optimal $m$-level approximation of the optimal quantile function $q_{1-\alpha}(\cdot)$. Specifically, the theorem can be restated as follows: $\log\left(\frac{4\text{var}(q_{1-\alpha}(X))}{\epsilon^2}\right)$ bits suffice to achieve a quantization error below $\epsilon$ for the signal $q_{1-\alpha}(X)$. 
\end{remark}

\begin{remark}\label{remark2}
   The authors propose to explore  a more comprehensive theory that bridges the gap between quantization theory and conformal prediction with conditional guarantees as a promising avenue for future research. Such a theory could potentially lead to the development of more nuanced, distribution-dependent impossibility results, thereby extending the findings of \cite{pmlr-v25-vovk12, 2019arXiv190304684F}.
\end{remark}

\begin{remark}\label{remark3}
While similarities can be drawn between the finite sample theory presented in our work and the classical  PAC-learning framework, there exist fundamental differences in both algorithm design and analysis, making our analysis more complex. In the context of supervised classification, the learner is provided with covariates and their corresponding labels, and the challenge lies in developing a classifier that generalizes effectively to unseen data. In a similar manner, the task in conformal prediction can be described as predicting the hypothetical label $q_{1-\alpha}(S|X=x)$ for every covariate point $x$. However, examining the calibration data after applying the conformity measure, i.e. $\{(X_i, S_i)\}_{i=1}^n$, reveals that the learner has access to covariates and noisy approximations, $\{S_i\}_{i=1}^n$, of the hypothetical labels $\{q_{1-\alpha}(S|X=x)\}_{i=1}^n$. Therefore, the learner's objective is to devise a classifier that not only generalizes well to new data but also effectively navigates these noisy labels. This challenge is further compounded when considering that the random variable $S$ do not represent an unbiased estimation of $q_{1-\alpha}(S|X=x)$. Often in practical scenarios, $S$ tends to concentrate around the mean of its distribution, whereas $q_{1-\alpha}(S|X=x)$ corresponds to a point in the upper tail of $S$'s probability density function. This discrepancy indicates that our methodology and theoretical analysis are tackling a scenario of greater sophistication compared to the standard supervised classification problem, encompassing not just prediction accuracy but also the subtleties of dealing with biased and noisy label information.
 
\end{remark}

\begin{remark}\label{candes}
    The optimization problem considered in \citet{gibbs2023conformal}, similar to ours, is based on minimizing a pinball loss-oriented objective over a class of functions. However, these two approaches are fundamentally different. The role of the function class in the method of \citet{gibbs2023conformal}, as mentioned by the authors, is to learn\textbackslash{}approximate the quantile value function ($q_{1-\alpha}(x)$ in our notation). In other words, they address a \emph{quantile regression} problem over calibration data. On the contrary, the function class in our method is used to learn a partitioning\textbackslash{}boundary in the covariate space, effectively tackling a \emph{classification} problem over the calibration data. Although this boundary is related to the conditional quantile function ($q_{1-\alpha}(x)$), the relationship is considerably more complex. To illustrate, consider two scenarios in which the conditional quantile functions differ by a constant real number $c$ (a situation that could arise if two datasets experience a constant label shift relative to each other). Then, the quantile regression function produced by the method of \citet{gibbs2023conformal} in these scenarios, should ideally differ by the same constant $c$. Conversely, the output of the function obtained by our method should ideally remain unchanged across both scenarios. This distinction stems from the fact that the model obtained by our method aims to identify regions in the covariate space whose members exhibit similar uncertainty levels (with respect to the predictions of the underlying pretrained predictive model), rather than mirroring the exact values of the conditional quantile function ($q_{1-\alpha}(x)$).
\end{remark}
\begin{remark}\label{pickm}
    In practical applications, the hyperparameter $m$ is optimized through cross-validation. Our approach to tune $m$, as informed by the theoretical insights in Section \ref{thm}, leverages the observed bell-shaped relationship between $m$ and accuracy (measured by MSCE). Starting with $m=1$, we note that the accuracy initially improves with increasing $m$, but after a certain point it begins to decline. To identify the optimal $m$, we employ the doubling trick: setting aside 20 percent of the calibration data for validation, we increment $m$ from a small value, evaluate PLCP on the validation set, and continue doubling $m$ until the validation metric worsens. We then fine-tune by bisecting between the last two $m$ values. Once the optimal $m$ is determined, we re-run PLCP on the entire calibration set using this optimized $m$ value. 
\end{remark}
\section{Additional Corollaries}
\begin{corollary}\label{Fallback}
    \textbf{Infinite sample}: Under Assumption~\ref{ass:reg}, the following fallback coverage guarantees hold,

    (a) Marginal validity:
    \begin{align*}
    \bigg|\Pr(Y \in C_\infty(X)) - (1-\alpha)\bigg|\leq O\left(m^{-\frac{1}{4}}\right)
    \end{align*}
    (b) Conditional validity: For any set $A \subseteq \mathcal{X}$ such that $\Pr(A) \geq \gamma$
    \begin{align*}
    \bigg|\Pr(Y \in C_\infty(X) \mid X \in A)-(1-\alpha)\bigg|\leq O\left(m^{-\frac{1}{4}} \gamma^{-\frac{1}{2}}\right)
    \end{align*}

    \textbf{Finite sample}: Under assumptions \ref{ass:iid}, \ref{ass:reg}, and \ref{ass:bounded} we have,
    
    (a) Marginal validity: With probability $1-\delta$,
    \begin{align*}
    \bigg|\Pr(Y \in C(X)) - (1-\alpha)\bigg| \leq O\left(\sqrt{m^{-\frac{1}{2}}+\sqrt{\frac{\ln\left( \frac{2}{\delta}\right)+m \ln (n)+\ln\left(\mathcal{N}\left(\mathcal{H}, d, \frac{1}{n}\right)\right)}{n}}+\lambda_{\mathcal{H}}}\right)
    \end{align*}
    
(b) Conditional validity: For any set $A \subseteq \mathcal{X}$ such that $P[A] \geq \gamma$, with probability $1-\delta$,
\begin{align*}
\bigg|\Pr(Y \in C(X) \mid X \in A) - (1-\alpha)\bigg|\leq O\left(\gamma^{-\frac{1}{2}} \sqrt{m^{-\frac{1}{2}}+\sqrt{\frac{\ln\left( \frac{2}{\delta}\right)+m \ln (n)+\ln \left(\mathcal{N}\left(\mathcal{H}, d, \frac{1}{n}\right)\right)}{n}}+\lambda_{\mathcal{H}}}\right)
\end{align*}

\end{corollary}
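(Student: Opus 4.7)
The plan is to derive Corollary~\ref{Fallback} directly from the MSCE bounds in Corollaries~\ref{cor1} (infinite sample) and~\ref{cor2} (finite sample), using only Jensen's inequality together with a simple conditional-expectation inequality. The guiding identity is that $\text{MSCE} = \mathbb{E}[(\text{cov}(X)-(1-\alpha))^2]$ is the $L^2$ deviation of the (random) coverage function from the nominal level, so any $L^2$ bound converts to an $L^1$-type bound on deviations of marginal and group-conditional coverage.

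For the marginal case, I would write $|\Pr(Y \in C(X)) - (1-\alpha)| = |\mathbb{E}[\text{cov}(X) - (1-\alpha)]|$ and apply Jensen's inequality
\begin{equation*}
|\mathbb{E}[\text{cov}(X)-(1-\alpha)]| \leq \sqrt{\mathbb{E}[(\text{cov}(X)-(1-\alpha))^2]} = \sqrt{\text{MSCE}}.
\end{equation*}
Plugging the infinite-sample bound from Corollary~\ref{cor1} gives $\sqrt{\text{MSCE}} = O(m^{-1/4})$, and plugging the finite-sample bound from Corollary~\ref{cor2} gives the square-root of the sum on the right-hand side of part (a) of the finite-sample statement.

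For the conditional case on any measurable $A \subseteq \mathcal{X}$ with $\Pr(A) \geq \gamma$, the key additional step is the inequality
\begin{equation*}
\mathbb{E}\bigl[(\text{cov}(X)-(1-\alpha))^2 \,\big|\, X \in A\bigr] \;\leq\; \frac{\mathbb{E}[(\text{cov}(X)-(1-\alpha))^2]}{\Pr(A)} \;\leq\; \frac{\text{MSCE}}{\gamma},
\end{equation*}
which follows immediately from the law of total expectation and non-negativity of the integrand on $A^c$. Combining this with another application of Jensen's inequality on the conditional measure yields
\begin{equation*}
\bigl|\Pr(Y \in C(X) \mid X \in A) - (1-\alpha)\bigr| \leq \sqrt{\text{MSCE}/\gamma},
\end{equation*}
and substituting the bounds from Corollaries~\ref{cor1} and~\ref{cor2} produces the claimed $O(m^{-1/4}\gamma^{-1/2})$ and finite-sample expressions.

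There is no real obstacle here; the proof is a two-line consequence of Jensen's inequality once the MSCE bounds are available. The only subtlety worth flagging is the constant absorption: the factors of $4L$, $20L$ and the logarithmic terms from Corollary~\ref{cor2} are all swept into the $O(\cdot)$ notation after taking square roots, and one should verify that $\sqrt{a+b+c} \leq \sqrt{a}+\sqrt{b}+\sqrt{c}$ is applied (or left implicit inside the square root as in the statement) consistently, so that the final expression matches the form written in the corollary.
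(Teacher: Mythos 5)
Your proof is correct and matches the paper's own argument essentially verbatim: the paper also derives the marginal bound from Jensen's inequality on $\mathbb{E}[\operatorname{Cov}(X)-(1-\alpha)]$, obtains the conditional bound from $\mathbb{E}[(\operatorname{Cov}(X)-(1-\alpha))^2\mid X\in A]\le \mathrm{MSCE}/\Pr(A)$, and then plugs in the rates from Corollaries~\ref{cor1} and~\ref{cor2}. No discrepancies.
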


\section{Additional Lemmas}
\begin{theorem}
(Chernoff Bound). Let $\left\{X_i\right\}_{i=1}^n$ be independent random variables bounded such that for each $i \in[n], X_i \in[0,1]$. Let $S_n=\sum_{i=1}^n X_i$ denote their sum. Then for all $\epsilon>0$,
$$
\Pr_{\left\{X_i\right\}_{i=1}^n}\left(\left|S_n-\mathbb{E}\left[S_n\right]\right| \geq \epsilon\right) \leq 2 \exp \left(-\frac{\epsilon^2}{n}\right)
$$
\end{theorem}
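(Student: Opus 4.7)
The plan is to prove this via the standard exponential-moment / Chernoff approach, since the stated inequality is essentially a (slightly loosened) Hoeffding bound for sums of bounded independent random variables. First, I would reduce the two-sided tail bound to a one-sided bound by a union bound: it suffices to show $\Pr(S_n - \mathbb{E}[S_n] \geq \epsilon) \leq \exp(-\epsilon^2/n)$ and $\Pr(S_n - \mathbb{E}[S_n] \leq -\epsilon) \leq \exp(-\epsilon^2/n)$, and then add. I will focus on the upper tail; the lower tail follows by replacing $X_i$ with $1 - X_i$.

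Next, fix $t > 0$ and apply Markov's inequality to the nonnegative random variable $e^{t(S_n - \mathbb{E}[S_n])}$, yielding
\begin{equation*}
\Pr(S_n - \mathbb{E}[S_n] \geq \epsilon) \leq e^{-t\epsilon}\, \mathbb{E}\!\left[e^{t(S_n - \mathbb{E}[S_n])}\right] = e^{-t\epsilon}\prod_{i=1}^n \mathbb{E}\!\left[e^{t(X_i - \mathbb{E}[X_i])}\right],
\end{equation*}
where the factorization uses independence of the $X_i$. The remaining task is to bound each per-coordinate moment generating function.

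The core estimate is Hoeffding's lemma: for a mean-zero random variable $Y$ supported in an interval $[a,b]$, one has $\mathbb{E}[e^{tY}] \leq \exp\!\bigl(t^2 (b-a)^2 / 8\bigr)$. Since $X_i \in [0,1]$, the centered variable $X_i - \mathbb{E}[X_i]$ lies in an interval of length at most $1$, so each factor is at most $\exp(t^2/8)$. I would sketch Hoeffding's lemma itself by convexity: write any point in $[a,b]$ as a convex combination of the endpoints, apply convexity of $y \mapsto e^{ty}$, take expectations, and then bound the resulting log-mgf of a Bernoulli by $t^2(b-a)^2/8$ via a Taylor-expansion / second-derivative argument.

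Plugging in gives $\Pr(S_n - \mathbb{E}[S_n] \geq \epsilon) \leq \exp\!\bigl(-t\epsilon + nt^2/8\bigr)$, and optimizing over $t$ (take $t = 4\epsilon/n$) yields the tighter bound $\exp(-2\epsilon^2/n)$, which in particular implies the stated $\exp(-\epsilon^2/n)$. Combining the two tails via the union bound finishes the proof. There is no real obstacle here, as the argument is textbook; the only delicate ingredient is Hoeffding's lemma, and one must make sure to loosen the final bound so that the factor of $2$ outside and the $\epsilon^2/n$ (rather than $2\epsilon^2/n$) inside match the statement exactly.
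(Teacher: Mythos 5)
Your proof is correct and is the standard Hoeffding-style argument: reduce to a one-sided tail by union bound, exponentiate and apply Markov, factor the moment generating function by independence, invoke Hoeffding's lemma to bound each factor by $\exp(t^2/8)$, and optimize over $t$ to get $\exp(-2\epsilon^2/n)$, which is strictly stronger than the stated $\exp(-\epsilon^2/n)$. The paper itself states this result without proof, citing it as a known concentration inequality, so there is no alternative argument in the paper to compare against; your derivation is exactly the textbook route one would expect the authors to have in mind, and the only point worth double-checking — that the final bound is loosened from $\exp(-2\epsilon^2/n)$ to $\exp(-\epsilon^2/n)$ to match the statement — you have handled explicitly.
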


\begin{lemma}\label{LipLem} (Lipschitz Continuity of the Pinball Loss Function). The pinball loss function exhibits Lipschitz continuity with a constant of 1. This implies that for any four real numbers \( y_1, y_2, y_3, y_4 \in \mathbb{R} \), the following inequality holds true:
\[
\left|\ell_\alpha\left(y_1, y_2\right) - \ell_\alpha\left(y_3, y_4\right)\right| \leq \left|\left(y_1 - y_2\right) - \left(y_3 - y_4\right)\right|.
\]
\end{lemma}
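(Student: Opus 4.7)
The plan is to reduce the two-argument pinball loss to a single-variable convex piecewise-linear function and then invoke the Lipschitz property of that function. Observe that $\ell_\alpha(q,s)$ depends on $(q,s)$ only through the difference $q-s$: defining $\phi:\mathbb{R}\to\mathbb{R}$ by $\phi(t)=\alpha t$ for $t\geq 0$ and $\phi(t)=-(1-\alpha)t$ for $t<0$, we have $\ell_\alpha(q,s)=\phi(q-s)$ directly from the definition in \eqref{pinball_loss}. Writing $t_1=y_1-y_2$ and $t_2=y_3-y_4$, the claimed inequality reduces to showing $|\phi(t_1)-\phi(t_2)|\leq |t_1-t_2|$, i.e.\ that $\phi$ is $1$-Lipschitz.

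To establish the Lipschitz bound, I would observe that $\phi(t)=\max\bigl(\alpha t,\,-(1-\alpha)t\bigr)$. Each of the two affine functions inside the max has slope of absolute value $\alpha$ or $1-\alpha$, both bounded by $\max(\alpha,1-\alpha)\leq 1$ since $\alpha\in[0,1]$, hence each is $1$-Lipschitz. Because the pointwise maximum of finitely many $K$-Lipschitz functions is $K$-Lipschitz, $\phi$ inherits the constant $1$ and the claim follows.

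As a sanity check, a direct case analysis on the signs of $t_1,t_2$ gives the same conclusion: in the two same-sign cases $\phi$ is linear on the segment joining $t_1$ and $t_2$ with slope in $\{\alpha,-(1-\alpha)\}$, so $|\phi(t_1)-\phi(t_2)|\leq \max(\alpha,1-\alpha)|t_1-t_2|$; in the mixed-sign case, splitting at $0$ and using $\phi(0)=0$ yields $|\phi(t_1)-\phi(t_2)|\leq \alpha|t_1|+(1-\alpha)|t_2|\leq \max(\alpha,1-\alpha)(|t_1|+|t_2|)=\max(\alpha,1-\alpha)\,|t_1-t_2|$. In every case the bound collapses to $|t_1-t_2|$.

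There is no real obstacle here; the only minor subtlety is confirming that $\max(\alpha,1-\alpha)\leq 1$ so that the Lipschitz constant comes out as $1$ rather than a possibly smaller value, which is immediate from $\alpha\in[0,1]$. Translating back via $\phi(t_i)=\ell_\alpha(y_{2i-1},y_{2i})$ recovers the stated inequality.
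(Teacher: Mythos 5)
Your argument is correct and takes a genuinely different route from the paper's. The paper proves the lemma by a direct four-way case split on the signs of $y_1-y_2$ and $y_3-y_4$, manipulating the two-argument pinball loss at each step. You instead exploit the structural fact that $\ell_\alpha(q,s)$ depends only on $q-s$, reduce to a one-variable function $\phi(t)=\ell_\alpha$ evaluated at difference $t$, identify $\phi$ as the pointwise maximum $\max\bigl(\alpha t,\, -(1-\alpha)t\bigr)$ of two affine functions, and invoke the standard fact that a finite max of $K$-Lipschitz functions is $K$-Lipschitz. This is cleaner and more conceptual than the paper's bare-hands case analysis, and it incidentally reveals the sharper constant $\max(\alpha,1-\alpha)\le 1$, which the paper's statement does not extract. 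Your appended ``sanity check'' case analysis is essentially the paper's own proof (though organized around the signs of $t_1,t_2$ rather than the original four variables), so you have in effect given both arguments. The one thing worth noting is that the mixed-sign case of your sanity check uses the identity $|t_1|+|t_2|=|t_1-t_2|$ when $t_1$ and $t_2$ have opposite signs; that is correct, but it is the kind of step a careless reader might miss, whereas the max-of-Lipschitz route avoids that detail entirely.
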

\begin{proof}
Our objective is to establish that \( \ell_\alpha\left(y_1, y_2\right) - \ell_\alpha\left(y_3, y_4\right) \leq \left|\left(y_1 - y_2\right) - \left(y_3 - y_4\right)\right| \). The converse inequality can be derived analogously due to symmetry. We consider the following four scenarios:

\textbf{Scenario 1}: When \( y_1 \geq y_2 \) and \( y_3 \geq y_4 \), we have:
\[
\ell_\alpha\left(y_1, y_2\right) - \ell_\alpha\left(y_3, y_4\right) = \alpha\left(y_1 - y_2\right) - \alpha\left(y_3 - y_4\right) \leq \left|\left(y_1 - y_2\right) - \left(y_3 - y_4\right)\right|.
\]

\textbf{Scenario 2}: For \( y_1 < y_2 \) and \( y_3 < y_4 \), it follows that:
\[
\ell_\alpha\left(y_1, y_2\right) - \ell_\alpha\left(y_3, y_4\right) = (1 - \alpha)\left(y_2 - y_1\right) - (1 - \alpha)\left(y_4 - y_3\right) \leq \left|\left(y_1 - y_2\right) - \left(y_3 - y_4\right)\right|.
\]

\textbf{Scenario 3}: In the case where \( y_1 \geq y_2 \) but \( y_3 < y_4 \), the equation becomes:
\[
\begin{aligned}
\ell_\alpha\left(y_1, y_2\right) - \ell_\alpha\left(y_3, y_4\right) &= \alpha\left(y_1 - y_2\right) - (1 - \alpha)\left(y_4 - y_3\right) \\
&= \alpha\left(y_1 - y_2 - \left(y_3 - y_4\right)\right) + \left(y_3 - y_4\right) \leq \left|\left(y_1 - y_2\right) - \left(y_3 - y_4\right)\right|.
\end{aligned}
\]

\textbf{Scenario 4}: Lastly, when \( y_1 < y_2 \) and \( y_3 \geq y_4 \), we have:
\[
\begin{aligned}
\ell_\alpha\left(y_1, y_2\right) - \ell_\alpha\left(y_3, y_4\right) &= (1 - \alpha)\left(y_2 - y_1\right) - \alpha\left(y_3 - y_4\right) \\
&= (1 - \alpha)\left(y_2 - y_1 - \left(y_4 - y_3\right)\right) + \left(y_4 - y_3\right) \leq \left|\left(y_1 - y_2\right) - \left(y_3 - y_4\right)\right|.
\end{aligned}
\]
\end{proof}

\begin{lemma}\label{covering}
Let \( B_{\infty}^m \) denote the unit ball in \(\mathbb{R}^m\) under the sup norm, defined by \( B_{\infty}^m = \{ x \in \mathbb{R}^m : \| x \|_{\infty} \leq 1 \} \). For any positive real number \( \delta \), smaller than 2, the covering number \( N(B_{\infty}^m, \delta) \) of \( B_{\infty}^m \) can be bounded above by $\lceil \frac{2}{\delta} \rceil^m$.
\end{lemma}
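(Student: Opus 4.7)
The plan is to reduce the $m$-dimensional problem to a one-dimensional covering problem via the product structure of the sup-norm ball $B_\infty^m = [-1,1]^m$, and then construct an explicit grid net in one dimension.

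First, I would build a 1D $\delta$-net of the interval $[-1,1]$. Set $N = \lceil 2/\delta \rceil$ and take the explicit set
\[
T = \{\, -1 + k\delta \;:\; k = 0, 1, \ldots, N-1 \,\}.
\]
There are two things to verify. First, $T \subseteq [-1,1]$: the largest element is $-1 + (N-1)\delta$, and since $N \leq 2/\delta + 1$ (a property of ceilings for $\delta < 2$) we get $(N-1)\delta \leq 2$, so $-1 + (N-1)\delta \leq 1$. Second, $T$ is a $\delta$-net of $[-1,1]$: for any point $x \in [-1,1]$, either $x$ lies in some interval $[-1 + k\delta, -1 + (k+1)\delta]$ with $k \leq N-2$, in which case $x$ is within $\delta/2$ of the nearer endpoint, or $x$ lies in the residual interval $[-1+(N-1)\delta, 1]$. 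In the latter case its distance to the last grid point $-1+(N-1)\delta$ is at most $1 - (-1+(N-1)\delta) = 2 - (N-1)\delta$, which is $\leq \delta$ because $N \geq 2/\delta$ (again a ceiling property). Hence every point of $[-1,1]$ is within $\delta$ of some element of $T$, and $|T| = N = \lceil 2/\delta \rceil$.

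Next, I would lift this to dimension $m$ by taking the Cartesian product $\mathcal{N} = T^m \subseteq [-1,1]^m = B_\infty^m$. Given any $x = (x_1, \ldots, x_m) \in B_\infty^m$, choose $y_i \in T$ to be a nearest point to $x_i$ in the 1D sense, so that $|x_i - y_i| \leq \delta$ for each coordinate. Then $y = (y_1, \ldots, y_m) \in \mathcal{N}$ and
\[
\|x - y\|_\infty = \max_{1 \leq i \leq m} |x_i - y_i| \leq \delta,
\]
which shows $\mathcal{N}$ is a $\delta$-net of $B_\infty^m$ in the sup norm. The cardinality is $|\mathcal{N}| = |T|^m = \lceil 2/\delta \rceil^m$, yielding the claimed upper bound on $\mathcal{N}(B_\infty^m, \|\cdot\|_\infty, \delta)$.

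The only delicate step is the boundary analysis in 1D: the right endpoint $1$ need not coincide with any grid point, so one must check the residual interval explicitly, and this is where the ceiling $\lceil 2/\delta \rceil$ (as opposed to a floor) is essential and where the hypothesis $\delta < 2$ is used (to ensure $N \geq 1$ and the constructed set is nonempty and contained in $[-1,1]$). Once that is handled, the $m$-dimensional extension is a direct product argument, and no further tools beyond elementary properties of ceilings are needed.
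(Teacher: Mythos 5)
Your proof is correct and takes essentially the same route as the paper's: cover $B_\infty^m = [-1,1]^m$ by a Cartesian product of one-dimensional grids with spacing $\delta$, using $\lceil 2/\delta \rceil$ points per axis. If anything, your version is slightly more careful than the paper's, since you explicitly place the net $T^m$ inside $[-1,1]^m$ (required by the paper's definition of an $\varepsilon$-net, which demands $\mathcal{N}\subseteq K$) and you verify the residual boundary interval, whereas the paper's grid of $\delta$-cubes centered at lattice multiples of $\delta$ glosses over whether those centers lie in the ball.
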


\begin{proof}
Consider a grid of cubes in \(\mathbb{R}^m\), each of side length \( \delta \), where \( 0 < \delta < 2 \). These cubes are the Cartesian product of intervals of length \( \delta \), centered at grid points in \(\mathbb{R}^m\). A cube centered at a point \( y = (y_1, y_2, \ldots, y_m) \) is defined as the set \( \{ x \in \mathbb{R}^m : |x_i - y_i| \leq \delta/2, \forall i = 1, \ldots, m \} \). The grid points form a regular lattice in \(\mathbb{R}^m\), with each coordinate being an integer multiple of \( \delta \).

Any point $ x \in B_{\infty}^m $ will lie within a distance of $ \frac{\delta}{2} $ (in the sup norm) from some grid point. The unit ball under the sup norm can be inscribed within a cube of side length 2, centered at the origin. This larger cube can be covered by at most $ \lceil \frac{2}{\delta} \rceil $ intervals of length $ \delta $ along any axis. Therefore, in $ m $ dimensions, the total number of cubes required to cover $ B_{\infty}^m $ is bounded above by $ \lceil \frac{2}{\delta} \rceil^m $.
\end{proof}

\section{Proofs of Section \ref{thm}}\label{Proofs}
\begin{proof} [Proof of Proposition \ref{lem1}]
Starting from the right-hand side, we have
\begin{align}
    \E[\ell_{\alpha}(g(X), S) - \ell_{\alpha}(q_{1-\alpha}(X), S)] &= \mathbb{E}_X \E_{S|X}[\ell_{\alpha}(g(X), S) - \ell_{\alpha}(q_{1-\alpha}(X), S)] \\ \label{eq:4}
    &= \E_X[\gamma(X, g(X)) - \gamma(X, q_{1-\alpha}(X))],
\end{align}
where $\gamma(x,q) := \mathbb{E}_{S|X=x}[\ell_{\alpha}(q, S)]$.

Note that the derivative of $\gamma$ with respect to $q$ is
\begin{align} \label{derivative_of_gamma}
    \gamma^{'}(x,q) = \frac{d}{dq} \gamma(x,q) = \Pr[S < q \mid X = x] - (1 - \alpha),\quad \forall q \in \R.
\end{align}
This will allow us to connect the pinball loss to conditional coverage of the prediction sets. To do so, we first show that we can bound $\gamma(.,.)$ in the following way 
\begin{equation} \label{claim}
\gamma(x,q) - \gamma(x, q_{1-\alpha}(S \mid X = x)) \leq
\frac{\gamma^{'}(x,q)^2}{2L}
\end{equation}
To show \eqref{claim}, we assume that  $q \geq q_{1-\alpha}(S \mid X = x)$ and note that for the other case, i.e. $q < q_{1-\alpha}(S \mid X = x)$, the proof follows similarly. 

We can write
\begin{align*}
    \gamma(x,q) - \gamma(x, q_{1-\alpha}(S \mid X = x)) 
    &= \gamma(x,q) - \gamma\left(x,q - \frac{\gamma^{'}(x,q)}{L}\right) + \gamma\left(x,q - \frac{\gamma^{'}(x,q)}{L}\right) - \gamma(x,q_{1-\alpha}(S \mid X = x)) \\ 
    &\stackrel{(a)} {\geq}\gamma(x,q) - \gamma\left(x,q - \frac{\gamma^{'}(x,q)}{L}\right) \\ 
    & \stackrel{(b)} {\geq} \int_{q - \frac{\gamma^{'}(x,q)}{L}}^{q} \gamma^{'}(x,\tilde{q}) d\tilde{q} \\
    &\stackrel{(c)}{\geq} \int_{q - \frac{\gamma^{'}(x,q)}{L}}^{q} \left[\gamma^{'}(x,q) - L(q - \tilde{q})\right] d\tilde{q} \\
    &= \frac{\gamma^{'}(x,q)^2}{L} - \frac{\gamma^{'}(x,q)^2}{2L} \\
    &= \frac{\gamma^{'}(x,q)^2}{2L},
\end{align*}
where (a) follows from \eqref{quantilemin}, (b) is due to the fundamental theorem of calculus, and (c) follows from assumption \ref{ass:reg} which results in the $L$-Lipschitz continuity of $\gamma^{'}(x,q)$ in term of $q$.
This will conclude the proof of \eqref{claim}.

Continuing from \eqref{eq:4}, we have
\begin{align*}
    \mathbb{E}[\ell_{\alpha}(g(X), S) - \ell_{\alpha}(q_{1-\alpha}(X), S)] &\stackrel{\eqref{claim}}{\geq} \mathbb{E}_X\left[\frac{\gamma^{'}(X,g(X))^2}{2L}\right] \\
    &\stackrel{\eqref{derivative_of_gamma}}{=} \mathbb{E}_X\left[\frac{\left(\Pr[S < g(X) \mid X = x] - (1 - \alpha) \right)^2}{2L}\right] \\
    &= \frac{\text{MSCE}(C_g)}{2L},
\end{align*}
which concludes the proof of the proposition.
\end{proof}

\begin{proof}[Proof of Theorem~\ref{thm1}]
By using the Jensen inequality we have,
\begin{align*}
    \sum_{i=1}^m h^i(X)\ell_\alpha(q_i, S) \geq \ell_\alpha \left(\sum_{i=1}^m h^i(X)q_i, S \right) \quad \forall \: h \in \mathcal{X}^{\Delta_m},\:\forall\: \bq \in \mathbb{R}^m,
\end{align*}
which is a consequence of the convexity of the pinball loss. Taking an expectation and a minimum from both sides we have,
\begin{align*}
    \underset{h \in \Delta^\mathcal{X}_m, \bq \in \mathbb{R}^m}{\text{min}} \underset{(X, S) \sim \mathcal{D}}{\mathbb{E}}\left[\sum_{i=1}^m h^i(X)\ell_\alpha(q_i, S)\right] &\geq \underset{h \in \Delta^\mathcal{X}_m, \bq \in \mathbb{R}^m}{\text{min}} \underset{(X, S) \sim \mathcal{D}}{\mathbb{E}}\left[\ell_\alpha(\sum_{i=1}^m h^i(X)q_i, S)\right]\\
    &\geq\mathbb{E}\left[\ell_{\alpha}(q_{1-\alpha}(X), S)\right]
\end{align*}
where the second inequality comes from \eqref{quantilemin}. This means the quantity inside the absolute value on the  left-hand-side of the Theorem statement \eqref{thm1state} is positive.

 At a high level, the proof will proceed as follows: We first show that we can bound the left-hand-side quantity in \eqref{thm1state} by looking at an arbitrary partitioning of the space $\mathcal{X}$. Then, we further refine the bound by looking at a very specific partitioning of the space $\mathcal{X}$. 
Now, assume that $E = \{E_i\}_{i=1}^{m+1}$ is a partitioning on the set $\mathcal{X}$; i.e. 
$$
E=\{E_i\}_{i=1}^{m+1} \text{ such that } \bigcup_{i=1}^{m+1} E_i = \mathcal{X}, \text{and}\: E_i \cap E_j = \emptyset \quad \forall i \neq j,
$$
and $\boldsymbol{\tilde{q}}$ is an arbitrary vector in $\R^m$. We can write,
\begin{align} \nonumber
    \underset{h \in \Delta^\mathcal{X}_m, \bq \in \mathbb{R}^m}{\text{min}} \E_{(X, S)}\left[\sum_{i=1}^m h^i(X)\ell_\alpha(q_i, S)\right] &= 
    \underset{h \in \Delta^\mathcal{X}_m, \bq \in \mathbb{R}^m}{\text{min}} \E_X\left[ \mathbb{E}_{S|X} \left[\sum_{i=1}^m h^i(X)\ell_\alpha(q_i, S)\right]\right]
    \\
    \nonumber
    &=\underset{h \in \Delta^\mathcal{X}_m, \bq \in \mathbb{R}^m}{\text{min}} \E_X\left[\sum_{i=1}^m \mathbb{E}_{S|X} \left[ h^i(X)\ell_\alpha(q_i, S)\right]\right]
    \\
    \nonumber
    &=\underset{h \in \Delta^\mathcal{X}_m, \bq \in \mathbb{R}^m}{\text{min}} \E_X\left[\sum_{i=1}^m h^i(X)\mathbb{E}_{S|X} \left[\ell_\alpha(q_i, S)\right]\right]
    \\
    \nonumber
    &\stackrel{(a)}{=} \underset{\bq \in \mathbb{R}^m}{\text{min}} \E_X\left[\min_{i \in [m]} \mathbb{E}_{S|X} \left[\ell_\alpha(q_i, S)\right]\right]
    \\
    \label{partition-expectation}
    &\stackrel{(b)}{\leq} \sum_{i=1}^m \int_{x \in E_i} p(x) \mathbb{E}_{S|X=x} \left[ \ell_{\alpha}(\tilde{q}_i, S) \right] \, dx
\end{align}
Where (a) comes from the fact that the min over $h$ is taken over all the functions in $\Delta^\mathcal{X}_m$, and (b) is due to the fact that $E$ partitions the set $\mathcal{X}$.

A similar approach can be applied to reformulate $\mathbb{E}\left[\ell_{\alpha}(q_{1-\alpha}(X), S)\right]$ in terms of the partition $E$.
\begin{align}
    \nonumber
    \E_{(X, S)}\left[\ell_{\alpha}\left(q_{1-\alpha}(S|X), S\right)\right] &= \E_{X}\E_{S|X} \left[\ell_{\alpha}\left(q_{1-\alpha}(S|X), S\right)\right] 
    \\
    \label{optpart}
    &=\sum_{i=1}^m \int_{x \in E_i} p(x)\E_{S|X=x} \left[\ell_{\alpha}\left(q_{1-\alpha}(S|X=x), S\right)\right] \, dx
\end{align}
Now putting together \eqref{partition-expectation} and \eqref{optpart}, we can write,
\begin{align} 
    \nonumber
    \underset{h \in \Delta^\mathcal{X}_m, \bq \in \mathbb{R}^m}{\text{min}} \E_{(X, S)}&\left[\sum_{i=1}^m h^i(X)\ell_\alpha(q_i, S)\right]-\E_{(X, S)}\left[\ell_{\alpha}\left(q_{1-\alpha}(S|X), S\right)\right] 
    \\
    \nonumber
    &\leq  \sum_{i=1}^m \int_{x \in E_i} p(x)\big[\mathbb{E}_{S|X=x} \left[ \ell_{\alpha}\left(\tilde{q}_i, S\right)-\ell_{\alpha}\left(q_{1-\alpha}(S|X=x), S\right)\right]\big] \, dx
    \\ 
    \label{lip-part}
    &\leq \sum_{i=1}^m \int_{x \in E_i} p(x)\biggl| \, \tilde{q}_i-q_{1-\alpha}(S|X=x)  \, \biggr| \, dx
\end{align}
Here, the last inequality is due to Lemma \ref{LipLem} which addresses the Lipschitzness of the pinball loss. 

The relation \eqref{lip-part}  holds for any (arbitrary) partition $E$ and vector $\boldsymbol{\tilde{q}}$. Consequently, by using a carefully crafted  partition of the space $\mathcal{X}$, we will be able to obtain a tighter bound for our problem.  Let us fix the following partitioning of the set $\mathcal{X}$ and vector, keeping the same notation $E$ and $\boldsymbol{\tilde{q}}$.
Let $\eta$ be an arbitrary positive real number,
$$E_i = \left\{ x \in \mathcal{X} \,  \text{ such that: } \,  \;\left| \, q_{1-\alpha}(S|X=x)-\E[q_{1-\alpha}(S|X)] + \eta - \frac{(2i-1)\eta}{m-1} \,  \right| \leq \frac{\eta}{m-1} \right\} \quad \forall \; i \in [1, \cdots, m-1],$$ $$\Tilde{q}_i=\E[q_{1-\alpha}(S|X)] - \eta + \frac{(2i-1)\eta}{m-1}\quad \forall \; i \in [1, \cdots, m-1],$$
and
$$
E_m = \left\{ x \in \mathcal{X} \,  \text{ such that: } \, \;\biggl| \, q_{1-\alpha}(S|X=x)-\E[q_{1-\alpha}(S|X)] \, \biggr| \geq \eta \right\}, 
$$
$$ 
\Tilde{q}_m=\E[q_{1-\alpha}(S|X)].
$$
By definition, $\bigcup_{i=1}^{m}E_i$ constitutes a partition of $\mathcal{X}$, and $\boldsymbol{\tilde{q}}=(\Tilde{q}_1, \cdots, \Tilde{q}_m)$ is a vector in $\mathcal{R}^m$. Substituting back into the inequality \eqref{lip-part}, we obtain:
\begin{align} 
     \nonumber
     \underset{h \in \Delta^\mathcal{X}_m, \bq \in \mathbb{R}^m}{\text{min}} \E_{(X, S)}&\left[\sum_{i=1}^m h^i(X)\ell_\alpha(q_i, S)\right]-\E_{(X, S)}\left[\ell_{\alpha}\left(q_{1-\alpha}(S|X), S\right)\right]
     \\
     \nonumber
     &\leq \sum_{i=1}^m \int_{x \in E_i} p(x)\biggl|\, \Tilde{q}_i-q_{1-\alpha}(S|X=x) \, \biggr| \, dx
     \\ 
     \label{ineqlast}
     &\leq \frac{\eta}{m-1} +  \int_{x \in E_m} p(x)\biggl| \, q_{1-\alpha}(S|X=x)-\E[q_{1-\alpha}(S|X)] \, \biggr| \, dx
\end{align}
To finish the proof, it remains to bound the second term in \eqref{ineqlast}. We now show that for every $\eta > 0$ we have,
    \begin{equation} \label{clmvar}
    \int_{x \in E_m} p(x)\left| q_{1-\alpha}(S|X=x)-\E[q_{1-\alpha}(S|X)]\right| \, dx \leq \frac{\operatorname{Var}(q_{1-\alpha}(S|X))}{\eta}
    \end{equation}
To see the above inequality, we can write 
    \begin{align*}
        \operatorname{Var}(q_{1-\alpha}(S|X)) =& \int_{E_m} p(x) \left(q_{1-\alpha}(S|X=x)-\E[q_{1-\alpha}(S|X)\right)^2 dx + \int_{{E_m}^c} p(x) \left(q_{1-\alpha}(S|X=x)-\E[q_{1-\alpha}(S|X)\right)^2 dx\\
        &\geq \int_{E_m} p(x) \left(q_{1-\alpha}(S|X=x)-\E[q_{1-\alpha}(S|X)\right)^2dx\\
        &\geq \eta \int_{E_m} p(x) \left|q_{1-\alpha}(S|X=x)-\E[q_{1-\alpha}(S|X)\right|dx
    \end{align*}
Rearranging the last inequality proves \eqref{clmvar}. 

Finally, by plugging  \eqref{clmvar} into \eqref{ineqlast}, we obtain
\begin{align*}
    \underset{h \in \Delta^\mathcal{X}_m, \bq \in \mathbb{R}^m}{\text{min}} \E_{(X, S)}&\left[\sum_{i=1}^m h^i(X)\ell_\alpha(q_i, S)\right]-\E_{(X, S)}\left[\ell_{\alpha}\left(q_{1-\alpha}(S|X), S\right)\right]\\
    &\leq \frac{\eta}{m-1} +  \frac{\operatorname{Var}(q_{1-\alpha}(S|X))}{\eta},
\end{align*}
and choosing $\eta = \sqrt{(m-1)\operatorname{Var}(q_{1-\alpha}(S|X))}$ gives us
$$
\underset{h \in \Delta^\mathcal{X}_m, \bq \in \mathbb{R}^m}{\text{min}} \E_{(X, S)}\left[\sum_{i=1}^m h^i(X)\ell_\alpha(q_i, S)\right]-\E_{(X, S)}\left[\ell_{\alpha}\left(q_{1-\alpha}(S|X), S\right)\right]\leq 2\sqrt{\frac{\operatorname{Var}(q_{1-\alpha}(S|X))}{m-1}}
$$.
\end{proof}

\begin{proof}[Proof of Theorem \ref{thm2}]
We start by defining the following random variable
\begin{equation}
    Z_{h,\bq}=\frac{1}{n} \sum_{j=1}^n \sum_{i=1}^m h^i(X_j)\ell_\alpha(q_i, S_j) -\underset{(X, S) \sim \mathcal{D}}{\mathbb{E}}\left[\sum_{i=1}^m h^i(X)\ell_\alpha(q_i, S)\right].
\end{equation}
Utilizing Lemma \ref{LipLem} on Lipschitzness of pinball loss and assumption \ref{ass:bounded} we have with probability at least $1-\delta$ that,
\begin{align}\label{cher}
    \left|Z_{h,\bq}\right| \leq  \sqrt{\frac{\ln \left(\frac{2}{\delta}\right)}{n}}.
\end{align}
The rest of the proof is dedicated to show that a similar bound to \eqref{cher} holds with high probability simultaneously for all the possible values of $q$ and $h$. Recall the definitions of $\bq^*$ \eqref{final} and $\bq^\infty$ \eqref{secondary}. Here one key observation is, since the random variable $S$ only takes values in $[0, 1]$, both $\bq^\infty$ and $\bq^*$ should have all their entries between $0$ and $1$. This can be shown for $\bq^*$ by looking at the first order condition of \eqref{final} (similarly for $\bq^\infty$ by looking at the \eqref{secondary}), for every $i \in [1, \cdots, m]$,
\begin{align*}
    &\frac{d}{dq_i}\frac{1}{n}\sum_{j=1}^{n} \sum_{i=1}^{m} h^i(X_j) l_{\alpha}(q_i, S_j)=0\\
    &\Longleftrightarrow \frac{1}{n}\sum_{j=1}^{n} h^i(X_j) \left[\1\left[S_j\leq q_i\right] - (1-\alpha)\right]=0
\end{align*}
This means for every $h$ the optimal $q_i$ should be essentially a weighted quantile of $S_1, S_2, \cdots, S_n$. This results in $q_{i}^*\in[0, 1]$, for every $i \in [1, \cdots, m]$. Consequently, the rest of the proof tries to show a similar bound holds for all the $h \in \mathcal{H}, \bq \in \mathcal{V}$, where $\mathcal{V}=\{\bq\in \mathcal{R}^m|\quad 0\leq q_i\leq 1 \quad\forall i\in\{1, 2, \cdots, m\}\}$. To do so we look at $\varepsilon$-net sets over $\mathcal{H}$ and $\mathcal{V}$. A union bound will give us a bound over the $\varepsilon$-net sets, and then leveraging the $\varepsilon$-net properties we can argue a bound over $\mathcal{H}$ and $\mathcal{V}$.Let's continue with with proving the following lemma that will pave the way for the rest of arguments,
\begin{lemma}\label{difflem}
For every $h_1, h_2 \in \mathcal{H}$ and $\boldsymbol{q^1}, \boldsymbol{q^2} \in \mathcal{V}$ we have,
\begin{align*}
    |Z_{h_1,\boldsymbol{q^1}}-Z_{h_2,\boldsymbol{q^2}}|\leq 2||\boldsymbol{q^1}-\boldsymbol{q^2}||_{\infty} + 2 d(h_1, h_2).
\end{align*}
\end{lemma}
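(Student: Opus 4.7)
The plan is to write $Z_{h,\bq} = \hat{R}(h,\bq) - R(h,\bq)$, where $\hat{R}$ denotes the empirical average and $R$ the population expectation of $\sum_i h^i(X)\ell_\alpha(q_i,S)$. By the triangle inequality,
\begin{align*}
|Z_{h_1,\boldsymbol{q^1}}-Z_{h_2,\boldsymbol{q^2}}|
&\leq |\hat{R}(h_1,\boldsymbol{q^1})-\hat{R}(h_2,\boldsymbol{q^2})| \\
&\quad + |R(h_1,\boldsymbol{q^1})-R(h_2,\boldsymbol{q^2})|,
\end{align*}
so it suffices to bound each of these two quantities by $\|\boldsymbol{q^1}-\boldsymbol{q^2}\|_\infty + d(h_1,h_2)$ separately. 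The arguments are identical in structure for the empirical and population versions, so I will describe the empirical side and then just invoke the same reasoning for the population side.

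Within the empirical term, I would insert the hybrid $\frac{1}{n}\sum_j \sum_i h_1^i(X_j)\ell_\alpha(q^2_i,S_j)$ so that the two resulting differences each vary only one of $\bq$ and $h$. For the first piece (only $\bq$ changes), I apply the Lipschitz property of the pinball loss (Lemma~\ref{LipLem}) coordinatewise, giving $|\ell_\alpha(q^1_i,S_j)-\ell_\alpha(q^2_i,S_j)|\leq |q^1_i-q^2_i|\leq \|\boldsymbol{q^1}-\boldsymbol{q^2}\|_\infty$, and then use that $\sum_i h_1^i(X_j)=1$ since $h_1$ maps into $\Delta_m$; this yields a bound of $\|\boldsymbol{q^1}-\boldsymbol{q^2}\|_\infty$. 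For the second piece (only $h$ changes), I use that $S_j\in[0,1]$ by Assumption~\ref{ass:bounded} and $q^2_i\in[0,1]$ since $\boldsymbol{q^2}\in\mathcal{V}$, which together imply $|\ell_\alpha(q^2_i,S_j)|\leq 1$; then $\sum_i |h_1^i(X_j)-h_2^i(X_j)| \leq d(h_1,h_2)$ by definition of the metric $d$, yielding a bound of $d(h_1,h_2)$.

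Summing the two pieces bounds the empirical difference by $\|\boldsymbol{q^1}-\boldsymbol{q^2}\|_\infty+d(h_1,h_2)$. The exact same hybrid argument, carried out inside the expectation rather than the empirical average, bounds $|R(h_1,\boldsymbol{q^1})-R(h_2,\boldsymbol{q^2})|$ by the same quantity (the Lipschitz bound and the supremum in the definition of $d$ are both pointwise in $(X,S)$, so they survive expectation). Adding the two bounds produces the factor of $2$ stated in the lemma.

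No real obstacle here: the proof is a standard two-step hybrid decoupling. The only bookkeeping to be careful about is that we really do have $\boldsymbol{q^2}\in[0,1]^m$ (which is why the boundedness $|\ell_\alpha|\leq 1$ is available when decoupling the $h$-perturbation), and that the simplex constraint on $h_1$ is what collapses the $\bq$-perturbation sum to a single copy of $\|\boldsymbol{q^1}-\boldsymbol{q^2}\|_\infty$ rather than an $m$-fold blow-up.
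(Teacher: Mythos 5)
Your proof is correct and uses essentially the same ingredients as the paper's (hybrid decoupling in $(h,\bq)$, Lipschitzness of the pinball loss, boundedness $|\ell_\alpha|\leq 1$ via $q,S\in[0,1]$, and the simplex/metric definitions); the only difference is that you first split $Z$ into its empirical and population halves and then decouple $h$ from $\bq$, whereas the paper decouples $h$ from $\bq$ first and then splits empirical from population inside each term. The two orderings are interchangeable and yield the identical bound.
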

\begin{proof}
\begin{align}
    \nonumber
    |Z_{h_1,\boldsymbol{q^1}}-Z_{h_2,\boldsymbol{q^2}}|&=|Z_{h_1,\boldsymbol{q^1}}-Z_{h_1,\boldsymbol{q^2}}+Z_{h_1,\boldsymbol{q^2}}-Z_{h_2,\boldsymbol{q^2}}|\\ \label{dif}
    &\leq |Z_{h_1,\boldsymbol{q^1}}-Z_{h_1,\boldsymbol{q^2}}|+|Z_{h_1,\boldsymbol{q^2}}-Z_{h_2,\boldsymbol{q^2}}|
\end{align}
Now we bound each term separately,\\
\textbf{First term:}
\begin{align*}
    |Z_{h_1,\boldsymbol{q^1}}-Z_{h_1,\boldsymbol{q^2}}|&=\left|\frac{1}{n} \sum_{j=1}^n \sum_{i=1}^m h_1^i(X_j)\left(\ell_\alpha(q_{i}^1, S_j) - \ell_\alpha(q_{i}^2, S_j)\right) -\underset{(X, S) \sim \mathcal{D}}{\mathbb{E}}\left[\sum_{i=1}^m h_1^i(X)\left(\ell_\alpha(q_{i}^1, S) - \ell_\alpha(q_{i}^2, S)\right)\right]\right|\\
    &\stackrel{(a)}{\leq} \left|\frac{1}{n} \sum_{j=1}^n \sum_{i=1}^m h_1^i(X_j)\left(\ell_\alpha(q_{i}^1, S_j) - \ell_\alpha(q_{i}^2, S_j)\right)\right| + \left|\underset{(X, S) \sim \mathcal{D}}{\mathbb{E}}\left[\sum_{i=1}^m h_1^i(X)\left(\ell_\alpha(q_{i}^1, S) - \ell_\alpha(q_{i}^2, S)\right)\right]\right|\\
    &\stackrel{(b)}{\leq} \frac{1}{n} \sum_{j=1}^n \sum_{i=1}^m h_1^i(X_j)\left|\ell_\alpha(q_{i}^1, S_j) - \ell_\alpha(q_{i}^2, S_j)\right| + \underset{(X, S) \sim \mathcal{D}}{\mathbb{E}}\left[\sum_{i=1}^m h_1^i(X)\left|\ell_\alpha(q_{i}^1, S) - \ell_\alpha(q_{i}^2, S)\right|\right]\\
    &\stackrel{(c)}{\leq} \frac{1}{n} \sum_{j=1}^n \sum_{i=1}^m h_1^i(X_j)\left|q_{i}^1 - q_{i}^2\right| + \underset{(X, S) \sim \mathcal{D}}{\mathbb{E}}\left[\sum_{i=1}^mh_1^i(X)\left|q_{i}^1 -q_{i}^2\right|\right]\\
    &\stackrel{(d)}{\leq} 2||\boldsymbol{q^1}-\boldsymbol{q^2}||_{\infty}
\end{align*}
Where (a) and (b) are triangle inequalities and (c) is followed by the fact that pinball loss is 1-lipschitz in terms of it's first argument(look at Lemma \ref{LipLem}). (d) is also by the definition of $||\boldsymbol{q^1}-\boldsymbol{q^2}||_{\infty}$.\\
\textbf{Second term:}
\begin{align*}
    |Z_{h_1\boldsymbol{q^2}}-Z_{h_2\boldsymbol{q^2}}|&=\left|\frac{1}{n} \sum_{j=1}^n \sum_{i=1}^m \left(h_1^i(X_j)-h_2^i(X_j)\right)l_\alpha(q_{i}^2, S_j) -\underset{(X, S) \sim \mathcal{D}}{\mathbb{E}}\left[\sum_{i=1}^m \left(h_1^i(X)-h_2^i(X)\right)l_\alpha(q_{i}^2, S)\right]\right|\\
    & \stackrel{(a)}{\leq} \left|\frac{1}{n} \sum_{j=1}^n \sum_{i=1}^m \left(h_1^i(X_j)-h_2^i(X_j)\right)l_\alpha(q_{i}^2, S_j)\right| +\left|\underset{(X, S) \sim \mathcal{D}}{\mathbb{E}}\left[\sum_{i=1}^m \left(h_1^i(X)-h_2^i(X)\right)l_\alpha(q_{i}^2, S)\right]\right|\\
    &\stackrel{(b)}{\leq} \frac{1}{n} \sum_{j=1}^n \sum_{i=1}^m \left|\left(h_1^i(X_j)-h_2^i(X_j)\right)l_\alpha(q_{i}^2, S_j)\right| +\underset{(X, S) \sim \mathcal{D}}{\mathbb{E}}\left[\sum_{i=1}^m \left|\left(h_1^i(X)-h_2^i(X)\right)l_\alpha(q_{i}^2, S)\right|\right]\\
    &\stackrel{(c)}{\leq} \frac{1}{n} \sum_{j=1}^n \sum_{i=1}^m \left|h_1^i(X_j)-h_2^i(X_j)\right|\left| q_{i}^2\right| +\underset{(X, S) \sim \mathcal{D}}{\mathbb{E}}\left[\sum_{i=1}^m \left|h_1^i(X)-h_2^i(X)\right|\left|q_{i}^2\right|\right]\\
    &\stackrel{(d)}{\leq} 2 d(h_1, h_2),
\end{align*}
where (a) and (b) are triangle inequalities and (c) is followed by the fact that pinball loss is 1-lipschitz in terms of it's first argument(look at Lemma \ref{LipLem}). (d) is also by the definition of $d(h_1, h_2)$.
Plugging back to the \eqref{dif} concludes the Lemma \eqref{difflem}.
\end{proof}
Continuing the proof of Theorem \ref{thm2}, let's say $\varepsilon_1$-net$(\mathcal{H})$ and $\varepsilon_2$-net$(\mathcal{V})$ are two minimal $\epsilon$-net sets ($\varepsilon$-net sets with minimum cardinality). Then applying a union bound we have, with probability $1-\delta$,
\begin{align}
    \left|Z_{h,\bq}\right| \leq \sqrt{\frac{\ln \left(\frac{2\mathcal{N}(\mathcal{H}, d, \epsilon_1)\mathcal{N}(\mathcal{V}, ||.||_\infty, \epsilon_2)}{\delta}\right)}{n}}\quad\quad \forall h\in \varepsilon_1-\text{net}(\mathcal{H}),\quad \forall \bq\in \varepsilon_2-\text{net}(\mathcal{V}).
\end{align}

Now we can do the following calculation for any arbitrary $h \in \mathcal{H}$ and $\bq\in \mathcal{V}$. If $\Tilde{h} \in$ $\varepsilon_1$-net$(\mathcal{H})$ and $\boldsymbol{\tilde{q}}\in $ $\varepsilon_2$-net$(\mathcal{V})$ such that $d(h, \Tilde{h})\leq \epsilon_1$ and $||\bq-\boldsymbol{\tilde{q}}||_{\infty}\leq \epsilon_2$ then we have,
\begin{align*}
   |Z_{h,\bq}| \leq |Z_{h,\bq}-Z_{\Tilde{h},\boldsymbol{\tilde{q}}}| + |Z_{\Tilde{h},\boldsymbol{\tilde{q}}}|
   & \leq 2||\bq-\boldsymbol{\tilde{q}}||_\infty + 2 d(h, \tilde{h})+ |Z_{\Tilde{h},\boldsymbol{\tilde{q}}}|\\
   & \leq 2\epsilon_1 + 2 \epsilon_2 + \sqrt{\frac{\ln \left(\frac{2\mathcal{N}(\mathcal{H}, d, \epsilon_1)\mathcal{N}(\mathcal{V}, ||.||_\infty, \epsilon_2)}{\delta}\right)}{n}}.
\end{align*}
Where the second inequality follows from Lemma \ref{difflem} and the third one is based on the fact that $\Tilde{h} \in net_{\epsilon_1}(\mathcal{H})$ and $\boldsymbol{\tilde{q}}\in net_{\epsilon_2}(\mathcal{V})$. Now putting $\epsilon_1=\epsilon_2=\frac{1}{n}$ we have,
\begin{align*}
    |Z_{h,\bq}| &\leq \frac{2}{n} + \frac{2}{n} + \sqrt{\frac{\ln \left(\frac{2\mathcal{N}(\mathcal{V}, ||.||_\infty, \epsilon_2))\mathcal{N}(\mathcal{H}, d, \frac{1}{n})}{\delta}\right)}{n}}\\
    &\leq 5 \sqrt{\frac{\ln \left(\frac{2\mathcal{N}(\mathcal{V}, ||.||_\infty, \epsilon_2)\mathcal{N}(\mathcal{H}, d, \frac{1}{n})}{\delta}\right)}{n}}\\
    &= 5 \sqrt{\frac{\ln\left( \frac{2}{\delta}\right) + \ln\left( \mathcal{N}(\mathcal{V}, ||.||_\infty, \epsilon_2)\right) + \ln \left(\mathcal{N}(\mathcal{H}, d, \frac{1}{n})\right)}{n}}
\end{align*}
where the second inequality happens for sufficiently large n.\\
As a result of our calculations, we know have the following two inequalities hold at the same time with probability $1-\delta$,
\begin{align*}
    |Z_{h^*,\bq^*}|\leq 5 \sqrt{\frac{\ln\left( \frac{2}{\delta}\right) + \ln\left( \mathcal{N}(\mathcal{V}, ||.||_\infty, \epsilon_2)\right) + \ln \left(\mathcal{N}(\mathcal{H}, d, \frac{1}{n})\right)}{n}}\\
    |Z_{h^\infty,\bq^\infty}|\leq 5 \sqrt{\frac{\ln\left( \frac{2}{\delta}\right) + \ln\left( \mathcal{N}(\mathcal{V}, ||.||_\infty, \epsilon_2)\right) + \ln \left(\mathcal{N}(\mathcal{H}, d, \frac{1}{n})\right)}{n}}
\end{align*}
This leads to,
\begin{equation}\label{fin}
    \left|\underset{(X, S) \sim \mathcal{D}}{\mathbb{E}}\left[\sum_{i=1}^m {h^*}^i(X)\ell_\alpha({q_{i}^*}, S)\right]-\underset{(X, S) \sim \mathcal{D}}{\mathbb{E}}\left[\sum_{i=1}^m {h^\infty}^i(X)\ell_\alpha(q^\infty_i, S)\right]\right|\leq 10 \sqrt{\frac{\ln\left( \frac{2}{\delta}\right) + \ln\left( \mathcal{N}(\mathcal{V}, ||.||_\infty, \epsilon_2)\right) + \ln \left(\mathcal{N}(\mathcal{H}, d, \frac{1}{n})\right)}{n}}
\end{equation}
Now putting everything together,
\begin{align*}
    \left|\underset{(X, S) \sim \mathcal{D}}{\mathbb{E}}\left[\sum_{i=1}^m {h^*}^i(X)\ell_\alpha({q_{i}^*}, S)\right] - 
    \underset{(X, S) \sim \mathcal{D}}{\mathbb{E}}\left[\ell_{\alpha}(q_{1-\alpha}(X), S)\right]\right| 
    &\leq \Bigg|\underset{(X, S) \sim \mathcal{D}}{\mathbb{E}}\left[\sum_{i=1}^m {h^*}^i(X)\ell_\alpha({q_{i}^*}, S)\right] \\
    &\quad - \underset{(X, S) \sim \mathcal{D}}{\mathbb{E}}\left[\sum_{i=1}^m {h^\infty}^i(X)\ell_\alpha(q^\infty_i, S)\right]\Bigg| \\
    &+ \Bigg|\underset{(X, S) \sim \mathcal{D}}{\mathbb{E}}\left[\sum_{i=1}^m {h^\infty}^i(X)\ell_\alpha(q^\infty_i, S)\right] \\
    &\quad - \underset{h \in \mathcal{X}^{\Delta_m}, \bq \in \mathbb{R}^m}{\text{min}} \underset{(X, S) \sim \mathcal{D}}{\mathbb{E}}\left[\sum_{i=1}^m h^i(X)\ell_\alpha(q_i, S)\right]\Bigg|\\
    &+ \Bigg|\underset{h \in \mathcal{X}^{\Delta_m}, \bq \in \mathbb{R}^m}{\text{min}} \underset{(X, S) \sim \mathcal{D}}{\mathbb{E}}\left[\sum_{i=1}^m h^i(X)\ell_\alpha(q_i, S)\right] \\
    &\quad - \underset{(X, S) \sim \mathcal{D}}{\mathbb{E}}\left[\ell_{\alpha}(q_{1-\alpha}(X), S)\right]\Bigg|\\
    &\leq 10 \sqrt{\frac{\ln\left( \frac{2}{\delta}\right) + \ln\left( \mathcal{N}(\mathcal{V}, ||.||_\infty, \epsilon_2)\right) + \ln \left(\mathcal{N}(\mathcal{H}, d, \frac{1}{n})\right)}{n}}\\
    & + 2\sqrt{\frac{\text{var}(q_{1-\alpha}(X))}{m}} + \lambda_\mathcal{H},
\end{align*}
where the last inequality follows from \eqref{fin}, the definition of realizability gap \eqref{gap}, and Theorem \ref{thm1}. Lemma \ref{covering} concludes the proof by proving an upperbound on the $\mathcal{N}(\mathcal{V}, ||.||_\infty, \epsilon_2)$.

\end{proof}

\section{Additional Figures}
In this section you can find additional plots associated with the experiments.
\begin{figure*}[ht]
\centering
\includegraphics[width=.49\textwidth]{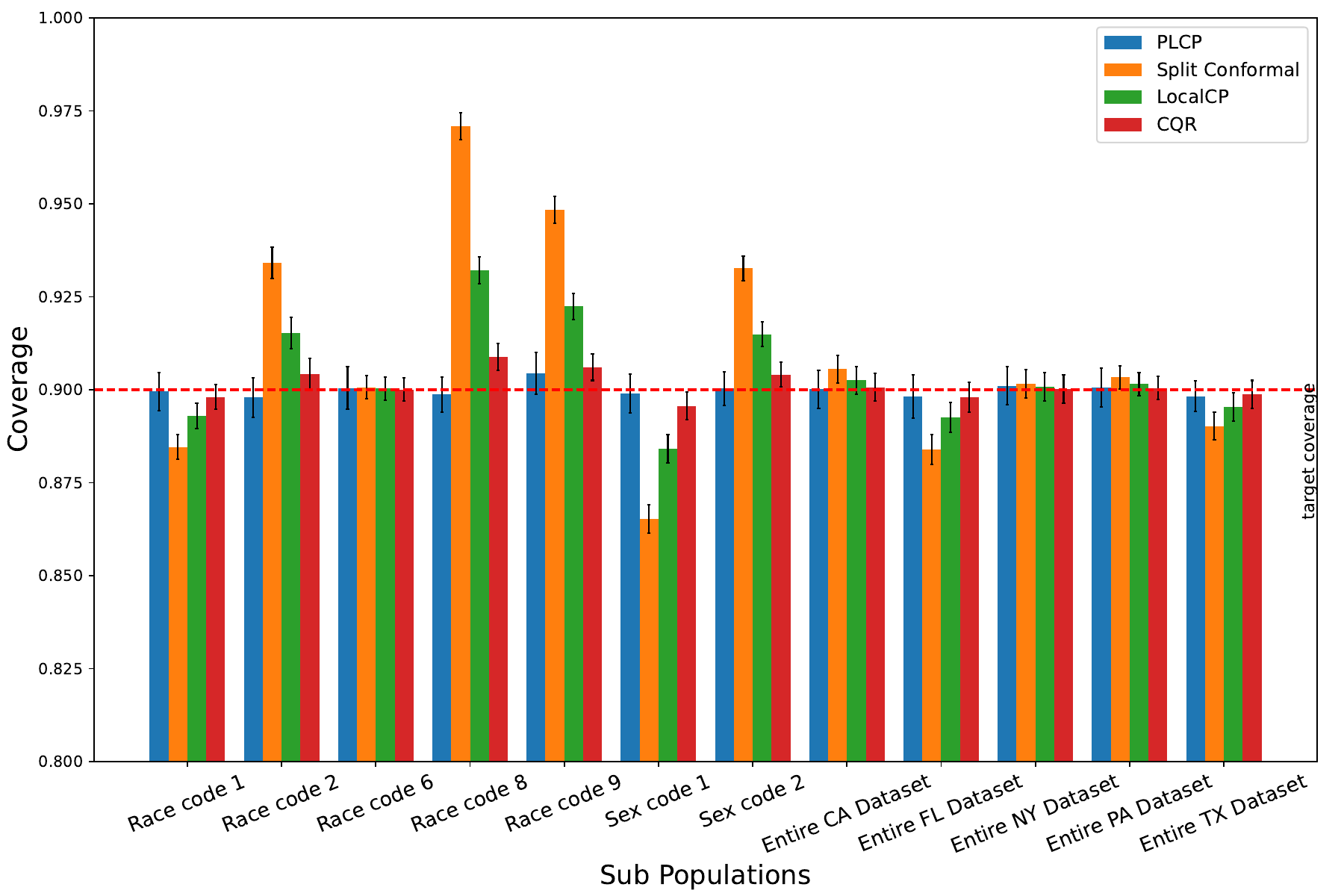}
\hfill
\includegraphics[width=.49\textwidth]{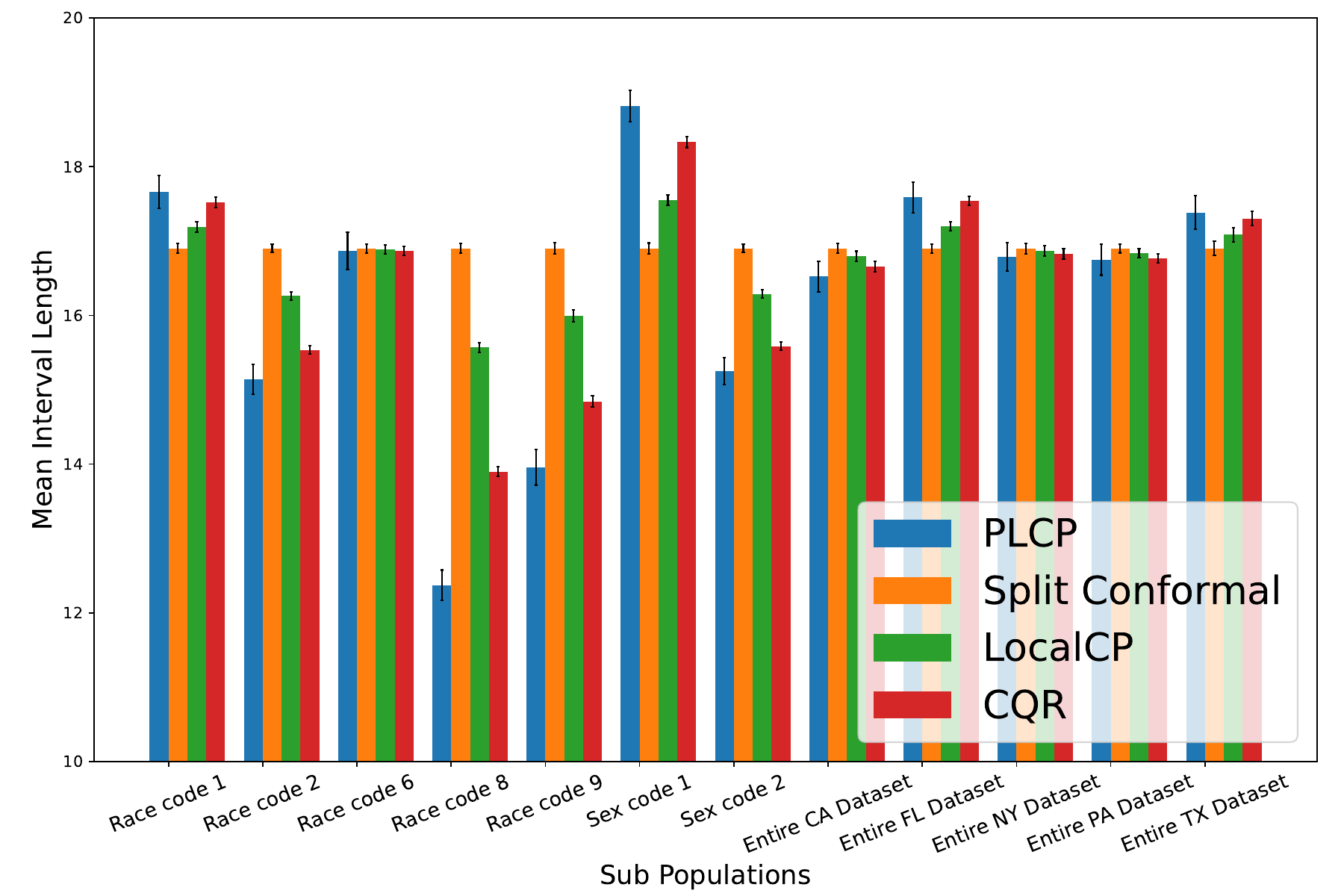}

\includegraphics[width=.4951\textwidth]{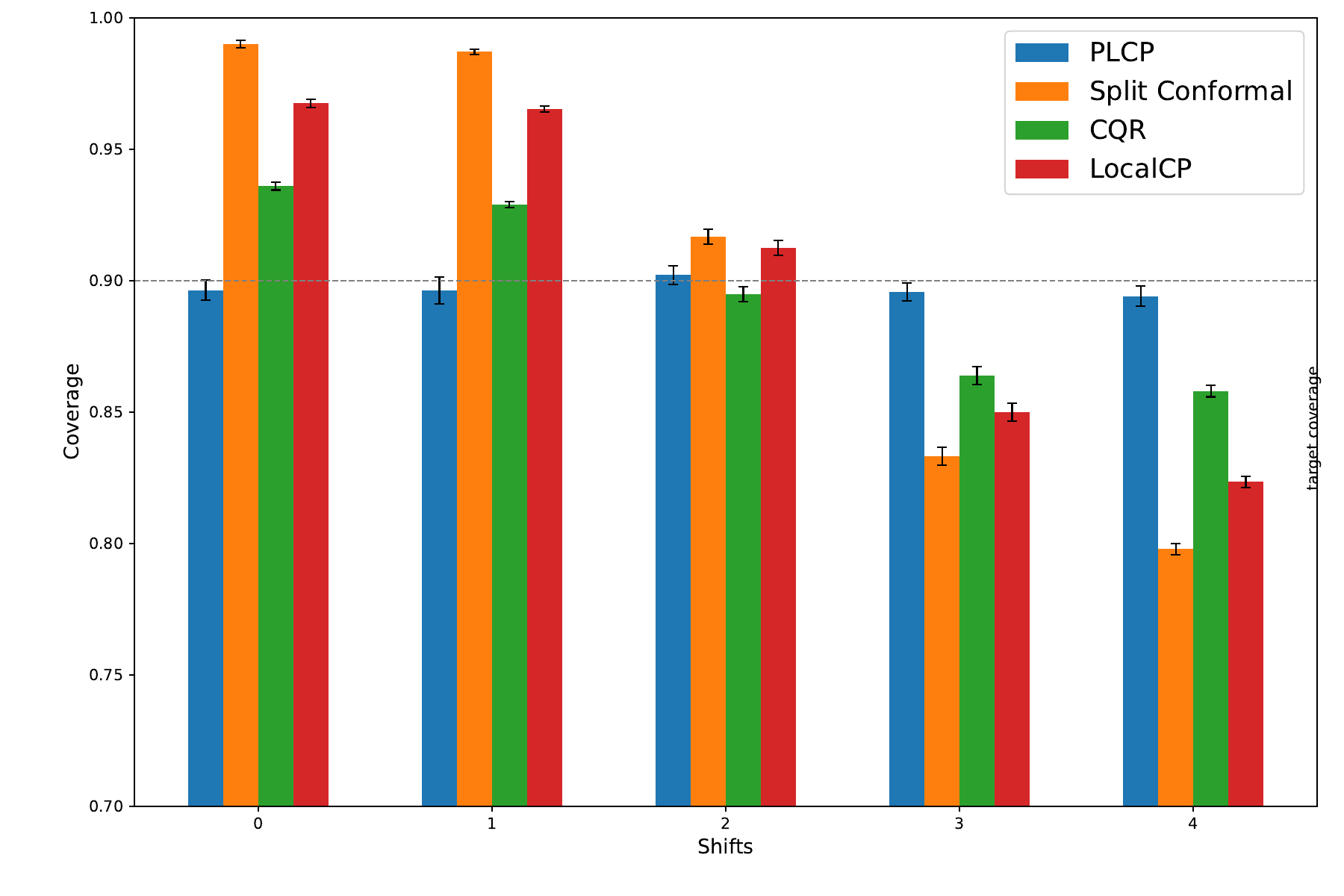}
\hfill
\includegraphics[width=.4951\textwidth]{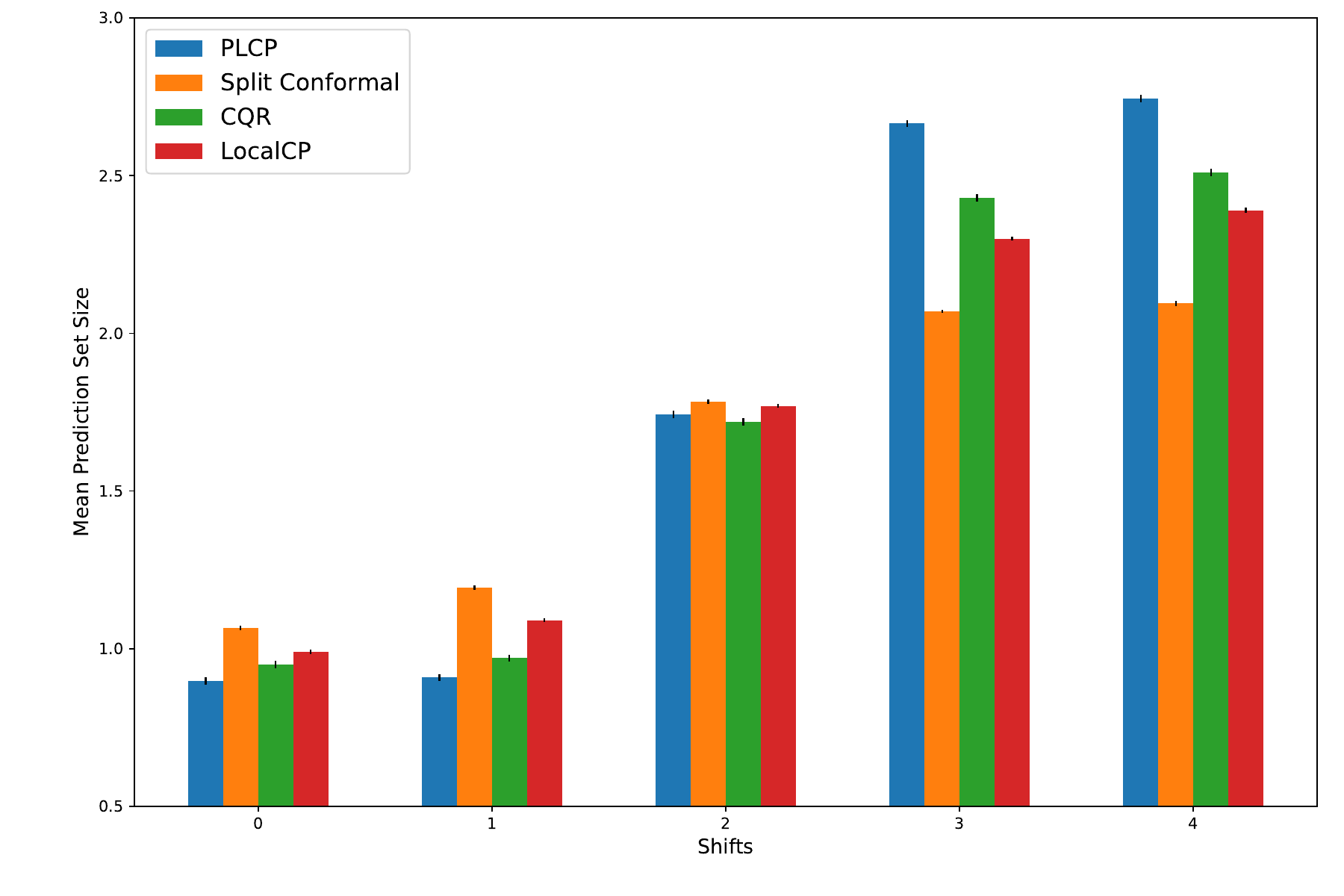}
\caption{Left-hand-side plots show coverage and right-hand-side plots show mean prediction set size. Row 1: US Census Data; Row 2: MNIST with Gaussian~Blur.}
\label{fig:combined_experiments1_reb}
\end{figure*}

\begin{figure*}[ht] 
\vskip 0.2in
\centering
\includegraphics[width=\textwidth]{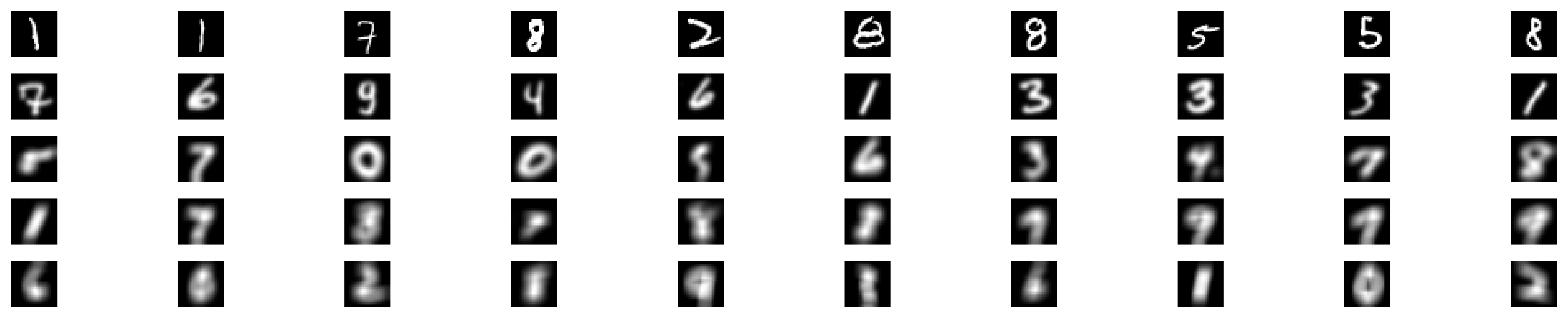}
\caption{Sample images from 5 groups with increasing levels of gaussian blur applied from top to bottom.} 
\label{fig:blured}
\vskip -0.2in
\end{figure*}


\end{document}